\definecolor{mydarkblue}{rgb}{0,0.08,0.45}
\newtheorem{lemma}{Lemma}
\newcolumntype{Y}{>{\centering\arraybackslash}X}
\pgfplotsset{compat=1.5}
\newcommand{\mysummary}[1]{}
\newcommand{\Xc}{X_\mathrm{c}}
\newcommand{\Xn}{X_\mathrm{n}}
\newcommand{\xc}{x_\mathrm{c}}
\newcommand{\xn}{x_\mathrm{n}}
\newcommand{\pc}{p_\mathrm{c}}
\newcommand{\hptheta}{\hat{p}_\theta(y|x)}
\newcommand{\forac}{f_\mathrm{c}(x)}
\newcommand{\ptheta}{p_\theta(y|x)}
\DeclareRobustCommand\onedot{\futurelet\@let@token\@onedot}
\def\@onedot{\ifx\@let@token.\else.\null\fi\xspace}
 \def\vs{\emph{vs}\onedot}
\def\wrt{w.r.t\onedot}
\DeclarePairedDelimiterX{\infdivx}[2]{\big[}{\big]}{%
  #1\;\delimsize\|\;#2%
}
\title{
Dual Risk Minimization: Towards Next-Level Robustness in Fine-tuning Zero-Shot Models
}
\author{
    \centerline{
  Kaican Li$^{1}$\thanks{\ Equal contribution, listed in alphabetical order.} \quad
  Weiyan Xie$^{1*}$ \quad
  Yongxiang Huang$^{2}$ \quad
  Didan Deng$^{2}$ \quad
  Lanqing Hong$^{2}$ \quad} \\
  \textbf{
 Zhenguo Li$^{1,2}$ \quad
 Ricardo Silva$^{3}$ \quad
  Nevin L. Zhang$^{1}$}\thanks{\ Correspondence to \href{mailto:lzhang@cse.ust.hk}{lzhang@cse.ust.hk}.}
  \vspace{0.6em}
  \\
  \centerline{$^{1}$The Hong Kong University of Science and Technology\quad} \\
  \centerline{$^{2}$Huawei \quad $^{3}$University College London\quad}\\
}
\begin{document}

\maketitle

\vspace{-1.3mm}
\begin{abstract}
Fine-tuning foundation models often compromises their robustness to distribution shifts.
To remedy this, most robust fine-tuning methods aim to preserve the pre-trained features.
However, not all pre-trained features are robust and those methods are largely indifferent to which ones to preserve.
We propose dual risk minimization (DRM), which combines
empirical risk minimization with worst-case risk minimization, to better preserve the core features of downstream tasks.
In particular, we utilize core-feature descriptions generated by LLMs to induce core-based zero-shot predictions which then serve as proxies to estimate the worst-case risk.
DRM balances two crucial aspects of model robustness: expected performance and worst-case performance, establishing a new state of the art on various real-world benchmarks.
DRM significantly improves the out-of-distribution performance of CLIP ViT-L/14@336 on ImageNet (75.9$\to$77.1), WILDS-iWildCam (47.1$\to$51.8), and WILDS-FMoW (50.7$\to$53.1); opening up new avenues for robust fine-tuning. {Our code is available at \url{https://github.com/vaynexie/DRM}.}
\end{abstract}

\vspace{-1.2mm}

\section{Introduction}

\mysummary{Foundation models like CLIP have demonstrated impressive OOD robustness but there is a trade-off between ID and OOD performance}

Foundation models such as CLIP~\citep{radford2021learning} and ALIGN~\citep{jia2021scaling} have revolutionized machine learning with their remarkable zero-shot and adaptive capabilities.
Research has shown that such capabilities are mainly due to robust feature representations gained from large-scale training data~\citep{fang2022data, xu2024demystifying}.
The models have been proven useful in various downstream tasks~\citep{shen2022much, zhang2022pointclip, betker2023improving,pi2024ins} and are the cornerstones of large multimodal models~\citep{alayrac2022flamingo, liu2023visual, zhu2024minigpt}.

\emph{Fine-tuning} is one of the most common approaches to the downstream adaptation of foundation models~\citep{bommasani2021opportunities, shen2022much}.
However, such adaptation often comes at the cost of robustness~\citep{radford2021learning, pham2023combined}, resulting in larger gaps between downstream in-distribution (ID) and out-of-distribution (OOD) performance~\citep{wortsman2022robust}.

\mysummary{How existing methods address the trade-off}

\citet{kumar2022fine} showed that fine-tuning tends to distort pre-trained features, and the distortion is exacerbated by randomly initialized heads which would significantly alter the pre-trained features to fit ID examples.
The proposed remedy, LP-FT, first learns a linear probe (LP) on frozen features, and then followed by regular fine-tuning (FT).
\citet{goyal2023finetune} took this idea further by reusing the pre-trained text encoder of CLIP as the classification head for fine-tuning.
This method improves LP-FT and is colloquially known as ``fine-tune like you pre-train'' (FLYP).
WiSE-FT~\citep{wortsman2022robust} investigated combining pre-trained models with their fine-tuned versions by weight averaging, which can be seen as yet another approach to recover robust features lost during fine-tuning.

\mysummary{How we understand the trade-off}


While the existing approaches aim to preserve pre-trained features, the fine-tuning process is still guided by empirical risk minimization (ERM;~\citealp{vapnik1998statistical}), which favors the most predictive but not necessarily the most robust features.
%
In general, there are two kinds of robust features: \emph{core features} which essentially define the target classes, and \emph{non-core features} that may aid prediction when the core features are not \emph{clear}~\citep{gao2023out}.
ERM models tend to exploit the non-core features even when the core features are clear~\citep{geirhos2020shortcut, shah2020pitfalls}.
This often harms OOD performance as non-core features are generally less reliable out-of-distribution.

To better preserve the core features, we propose a new principle called \emph{dual risk minimization} (DRM) which combines ERM with worst-case risk minimization (WRM;~\citealp{wald1945statistical})\footnote{Also known as distributionally robust optimization (DRO;~\citealp{ben2013robust, duchi2021statistics}).}, a common principle for domain generalization~\citep{arjovsky2019invariant, sagawa2020distributionally, cha2021swad, kirichenko2023last}.
This combination rests on our view that robustness involves \emph{two} main aspects: the \emph{expected} (or average) performance and the \emph{worst-case} performance over all domains.
While there is often a trade-off between these two aspects~\citep{tsipras2019robustness, teney2023id}, Figure~\ref{fig:clarity-and-sota} illustrates how DRM balances the trade-off to improve overall robustness.

\begin{figure}[t]
    \centering
    \vskip -2mm
    \includegraphics[width=\textwidth]{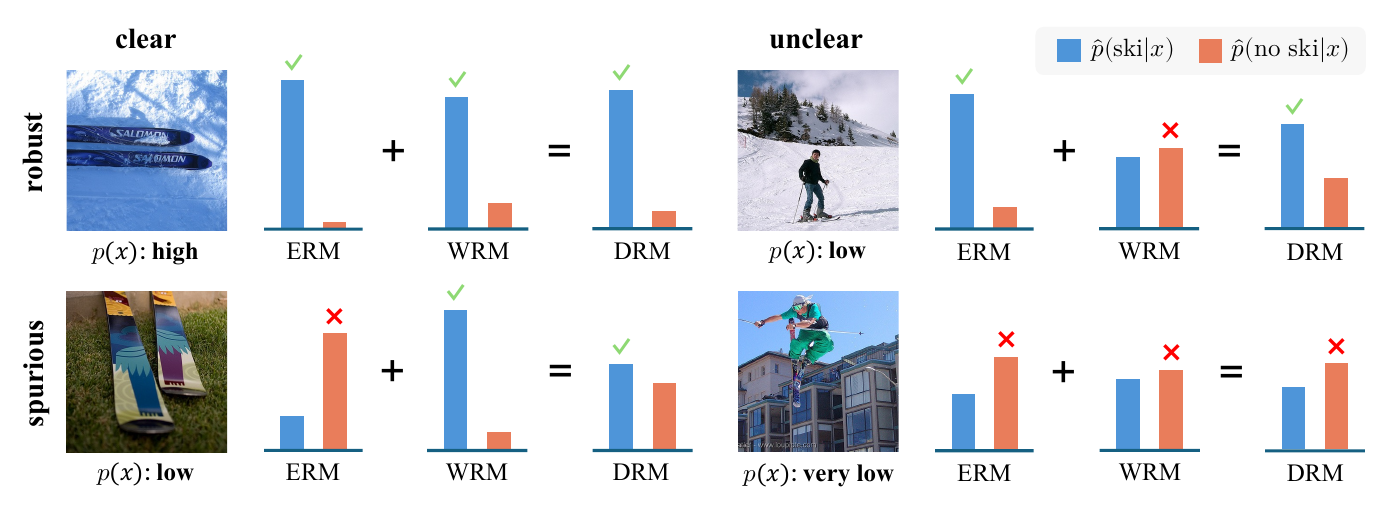}
    \caption{\textbf{Dual risk minimization (DRM) combines empirical risk minimization (ERM) and worst-case risk minimization (WRM) to complement their weaknesses.} In this binary classification task predicting if there are skis in a given image, (i) ERM underperforms when the core features of skis are \emph{clear} but the non-core features such as background/context are \emph{spurious} (i.e. negatively correlated with ski), and (ii) WRM underperforms when the core features are \emph{unclear} but the non-core features are \emph{robust} (i.e. positively correlated with ski).
    DRM outperforms ERM and WRM when the core features are not always clear and the non-core features are more often robust than not.
    }
    \label{fig:clarity-and-sota}
    \vskip -2mm
\end{figure}

The main challenge of applying DRM to real-world tasks is to assess of worst-case risk.
To this end, we use \emph{concept descriptions}~\citep{pratt2023does}---short texts that describe the core features of each class---obtained with GPT-4~\citep{achiam2023gpt}.
The description for \textit{cougar}, for instance, is ``\textit{a large, tawny cat with a muscular build and a small head}.''
We feed these descriptions to a pre-trained CLIP text encoder~\citep{radford2021learning} for the text embeddings, which are then used to construct soft class labels for each training image according to the similarity scores between the image and text embeddings.
The risk \wrt the soft labels can be seen as a proxy of the worst-case risk and is thus minimized instead.
Empirically, DRM significantly outperforms the state of the art on challenging benchmarks such as ImageNet~\citep{deng2009imagenet} and WILDS~\citep{koh2021wilds}.

In summary, we make the following key contributions in this paper:
\vspace{-1mm}
\begin{itemize}[leftmargin=5mm]
    \item We propose \emph{dual risk minimization} (DRM), a novel approach that combines ERM and WRM to improve downstream robustness of zero-shot foundation models while addressing the intractability of WRM through innovative use of concept descriptions.
    \item We highlight that robustness for many real-world problems concerns both \emph{expected} and \emph{worst-case} performance while most previous works focus on only one.
    We then show that DRM offers a simple and effective way to balance these two important aspects of robustness.
    \item We establish a strong new state of the art on multiple real-world benchmarks, promising next-level robustness in fine-tuning zero-shot models.
    On CLIP ViT-L/14@336, DRM achieves a significant, over 5\% relative improvement in OOD performance over the best baseline method.
\end{itemize}

\mysummary{How we propose to address the trade-off}

\section{Related work}
\paragraph{Robust fine-tuning of pre-trained models.}
Prior to the work of \citet{kumar2022fine, wortsman2022robust, goyal2023finetune} which we have introduced, \citet{li2018explicit} proposed to restrict the $L^2$ distance between the parameters of pre-trained and fine-tuned models via regularization.
Some other work explored updating only a small number of (pre-trained/add-on) parameters~\citep{guo2019spottune, zhang2020side, gao2024clip}.
Similar ideas~\citep{kirkpatrick2017overcoming, zenke2017continual} were also discussed in continual learning to mitigate catastrophic forgetting~\citep{mccloskey1989catastrophic}.
Apart from explicit constraints on model parameters, 
\citet{ge2017borrowing} turned to the source of robust features and proposed to incorporate a subset of pre-trained data for fine-tuning, while \citet{cha2022domain} aimed to enhance the mutual information between pre-trained and fine-tuned features.
\citet{jiang2019smart, zhu2020freelb} added smoothness constraints on model predictions for adversarial examples~\citep{szegedy2013intriguing} to help retain robust features.
\citet{andreassen2021evolution} showed that OOD accuracy tends to improve initially but then plateaus as the fine-tuning proceeds.
For more discussion on related work including concurrent ones, see Appendix~\ref{sec:recent_related-work}.

\paragraph{Worst-case risk minimization.}
The study of worst-case risk minimization (WRM) dates back to the work of \citet{wald1945statistical}, which has gradually evolved into what we know as robust optimization today~\citep{ben2009robust}.
More recently, WRM has been considered (by many) a basic principle for domain generalization (DG;~\citealp{blanchard2011generalizing, muandet2013domain}).
A notable example is invariant risk minimization (IRM;~\citealp{arjovsky2019invariant}), which aims to learn core-feature representations from multi-domain data.
Such representations, under mild causal assumptions, give rise to classifiers that minimize the worst risk~\citep{peters2016causal}.
Another key method, GroupDRO~\citep{sagawa2020distributionally}, imposes higher penalties on domains with higher empirical risks.
Unlike DRM, neither IRM nor GroupDRO formulates WRM as an explicit optimization constraint for ERM.
\citet{eastwood2022probable} pointed out that sacrificing too much average performance for worst-case performance is not ideal for DG.
Hence, they proposed to minimize the risk among the most likely domains.
Lastly, our setup is partly similar to \citet{alabdulmohsin2023adapting} which also relies on external information.




\paragraph{Prompt design for zero-shot classification.}
To better leverage the capability of zero-shot models, various prompt designs have been proposed.
\citet{menon2022visual, pratt2023does, maniparambil2023enhancing} mainly explored prompts for zero-shot classification.
Their prompts were generated by LLMs~\citep{radford2019language} with slightly different instructions than ours, not explicitly focusing on core features.
For example, \citet{pratt2023does} used ``\textit{Describe an image from the internet of a(n) {}...}'', which may inadvertently introduce descriptions of non-core features in the resulting prompts.
The prompts considered by \citet{yang2023language, yan2023learning} are closer to ours in this respect, where they used LLM-generated concept descriptions to build concept bottleneck models for interpretable image classification.
More recently, \citet{mao2024context} proposed to use context-aware prompts such as ``\textit{a [context] of [class name]},'' while \citet{cheng2024disentangled} used both domain-invariant and domain-specific prompts generated by LLMs.
However, both methods require either image context or specific domain information to generate the prompts.

\section{Dual risk minimization}
\paragraph{Data model.}
Let $X$ and $Y$ be the input and \emph{ground-truth} target variables for which we adopt the following data generation model:
\begin{equation}
    \label{eq:data-model}
    \begin{split}
        X &\gets h_X(\Xc, \Xn, \varepsilon),\\
        Y &\gets h_Y(\Xc);
    \end{split}
\end{equation}
where $(\Xc, \Xn)$ are latent variables and $\varepsilon$ is exogenous noise.
We call $\Xc$ \emph{core features} and $\Xn$ \emph{non-core features} of $(X, Y)$.
$\Xn$ and $Y$ may be correlated due to hidden confounders of $(\Xc, \Xn)$ and direct causal mechanisms between $(\Xc, \Xn)$.
Following \citet{peters2016causal}, we assume the causal mechanisms and the distribution of $\varepsilon$ are invariant across domains.
There are no other hidden variables or mechanisms.
Similar models were widely adopted in the literature \citep{tenenbaum1996separating, mahajan2021domain, mitrovic2021representation, ahuja2021empirical, liu2021learning, lv2022causality, ye2022ood, zhang2023causal, gao2024consistency} where $\Xc$ and $\Xn$ are sometimes referred to as `content' and `style'.
We use calligraphic letters such as $\mathcal{X}$ and $\mathcal{Y}$ to denote the set of possible outcomes of the random variables.



\paragraph{Ideal objective for robustness.}
Let $\mathcal{D}$ be all possible domains of a task, and 
$\mathscr{P}$ be some natural distribution over $\mathcal{D}$.
By definition, $\mathscr{P}(d) > 0$ for all $d \in \mathcal{D}$.
Every domain $d$ is associated with a data distribution $p_d(x, y, \xc, \xn)$ consistent with \eqref{eq:data-model}.
Let $\hptheta$ be a prediction model parameterized by $\theta \in \Theta$. Its risk in terms of negative log-likelihood,
\begin{equation}
    R_d(\theta) = \mathbb{E}_{(x, y) \sim p_d}[-\log\hptheta],
\end{equation}
can be seen as a measure of its performance in domain $d$.
Let $d_\mathrm{s} \in \mathcal{D}$ be the training domain.
For simplicity, we will omit $d$ when it is clear from the context, e.g., $R_{d_\mathrm{s}}(\theta)$ will be written as $R_\mathrm{s}(\theta)$.

For real-world applications, we argue that a \emph{robust} model should optimize its \emph{expected} performance over $\mathscr{P}$ while maintaining acceptable \emph{worst-case} performance across $\mathcal{D}$.
The expected performance implies how well the model would perform at the most general population level, while the worst-case performance tells us the model's performance in the worst scenario one may encounter.
We note that \citet{eastwood2022probable} and \citet{zhang2023connecting} share a similar view with us on robustness.

We formalize the above intuition as the following constrained optimization problem, namely \emph{idealized dual risk minimization} (IDRM), which aims to minimize the empirical risk of $\hptheta$ while ensuring its worst-case risk is below some threshold value $\alpha$:
\begin{equation}
    \label{eq:IDRM}
    \min_{\theta \in \Theta} R_\mathrm{s}(\theta)  \quad\mathrm{subject\ to}\quad \max_{d \in \mathcal{D}} R_d(\theta) \leq \alpha. \tag{IDRM}
\end{equation}
IDRM generalizes ERM~\citep{vapnik1998statistical} and WRM~\citep{wald1945statistical} as it reduces to ERM when $\alpha$ is large and to WRM when $\alpha$ is small.
IDRM also bears some resemblance to IRM~\citep{arjovsky2019invariant}, which involves an implicit WRM constraint.
The constraint, however, requires the classification head to be \emph{optimal} in all training domains and thus may be too demanding in practice.
Another closely related work, GroupDRO~\citep{sagawa2020distributionally}, proposes to minimize the worst training-domain risk---a more empirical flavor of WRM.
Both IRM and GroupDRO rely on ideally grouped training data to capture invariance across domains.
In Section~\ref{sec.4}, we will show that this is largely unnecessary for zero-shot models and provide a practical solution for IDRM with just \emph{single-domain} data.

\paragraph{From IDRM to DRM.}
IDRM can be solved as the following unconstrained optimization problem due to strong duality (proof in Appendix~\ref{appendix:proof}).






\begin{restatable}[]{thm}{duality}
    \label{thm:duality}
    Strong duality holds between IDRM and the following dual problem:
    \begin{equation}
        \label{eq:dual-problem}
        \max_{\lambda' \geq 0} \min_{\theta \in \Theta}\ \Big[R_\mathrm{s}(\theta) + \lambda' \max_{d \in \mathcal{D}} R_d(\theta)\Big] - \lambda' \alpha.
    \end{equation}
\end{restatable}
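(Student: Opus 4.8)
The plan is to read \eqref{eq:dual-problem} as the ordinary Lagrangian dual of \eqref{eq:IDRM}, whose single inequality constraint is $g(\theta) := \max_{d \in \mathcal{D}} R_d(\theta) - \alpha \le 0$. With Lagrangian $L(\theta, \lambda') = R_\mathrm{s}(\theta) + \lambda'\big(\max_{d\in\mathcal{D}} R_d(\theta) - \alpha\big)$, the first step is the routine bookkeeping: $\sup_{\lambda' \ge 0} L(\theta,\lambda')$ equals $R_\mathrm{s}(\theta)$ when $g(\theta)\le 0$ and $+\infty$ otherwise, so $\inf_\theta \sup_{\lambda'\ge 0} L(\theta,\lambda')$ is precisely the value of \eqref{eq:IDRM}, while $\sup_{\lambda'\ge 0}\inf_\theta L(\theta,\lambda')$ is \eqref{eq:dual-problem}. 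Weak duality, $\sup\inf \le \inf\sup$, is immediate, so the entire content of the theorem is the reverse inequality (no duality gap).

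For the reverse inequality I would not argue directly over $\theta \in \Theta$, since $\theta \mapsto R_d(\theta)$ is generally non-convex for neural networks; instead I would lift the problem to the convex set $\mathcal{Q}$ of conditional distributions $q(\cdot \mid x)$ on $\mathcal{Y}$ (equivalently, randomized predictors). On $\mathcal{Q}$ each $q \mapsto R_d(q) = \mathbb{E}_{(x,y)\sim p_d}[-\log q(y\mid x)]$ is convex, being an integral of the convex map $t \mapsto -\log t$; hence $q \mapsto \max_{d\in\mathcal{D}} R_d(q)$ is convex as a supremum of convex functions, the feasible set $\{q : \max_{d} R_d(q) \le \alpha\}$ is convex, and $L(q,\lambda')$ is convex in $q$ and affine in $\lambda'$. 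Standard Lagrangian strong duality for convex programs then closes the gap, provided a Slater point exists, i.e. some $q$ with $\max_d R_d(q) < \alpha$; this is exactly the natural regime for \eqref{eq:IDRM}, where $\alpha$ strictly exceeds the optimal worst-case risk $\inf_q \max_d R_d(q)$ (the boundary case can be recovered by a limiting argument, or by invoking lower semicontinuity and convexity of the value function $v(\alpha) = \inf\{R_\mathrm{s}(q) : \max_d R_d(q) \le \alpha\}$ and the identity $v = v^{**}$ at $\alpha$). I would then transfer back to $\Theta$: under the expressiveness assumption implicit for zero-shot foundation models --- that $\{\hat p_\theta : \theta \in \Theta\}$ is (dense in) the relevant subset of $\mathcal{Q}$ --- the primal and dual values over $\Theta$ agree with those over $\mathcal{Q}$, and attainment of the outer $\min_\theta$, $\max_{\lambda'}$, $\max_{d}$ follows from continuity together with mild compactness/coercivity that I would state explicitly.

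The main obstacle is precisely this non-convexity of $\theta \mapsto R_d(\theta)$: without the lift to $\mathcal{Q}$ (or an equivalent realizability hypothesis making the set of achievable predictors convex) the Lagrangian is not convex--concave in the appropriate variables and no minimax theorem applies. A secondary point to handle carefully is that $\mathcal{D}$ may be infinite, so every $\max_{d\in\mathcal{D}}$ is really a supremum; this does not affect convexity or the duality argument, but wherever the statement writes $\max$ I would either assume enough regularity ($\mathcal{D}$ compact, $d \mapsto R_d(\theta)$ upper semicontinuous) for the supremum to be attained or simply read those symbols as suprema, and I would also check that $\inf_\theta L(\theta,\lambda')$ is finite for $\lambda' \ge 0$ so that the dual problem is well posed.
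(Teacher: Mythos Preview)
Your proposal is correct and follows essentially the same route as the paper: both lift the problem from $\theta$ to the predictor $\hat{p}_\theta$ (your $\mathcal{Q}$), observe that cross-entropy is convex in that variable so the constraint function $\max_{d}R_d$ is convex as a pointwise supremum, and then invoke Slater's condition to close the duality gap whenever $\alpha > \inf_q \max_d R_d(q)$. Your treatment is in fact more careful than the paper's---you explicitly flag the non-convexity in $\theta$, the need for an expressiveness/realizability assumption to transfer the result back to $\Theta$, and the $\sup$ versus $\max$ issue for infinite $\mathcal{D}$---whereas the paper's proof simply asserts convexity ``with respect to $\hat{p}_\theta(y|x)$'' and applies Slater, leaving those caveats implicit.
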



Let $\lambda^\star$ be any solution of $\lambda'$ to \eqref{eq:dual-problem}. By the strong duality, IDRM then reduces to $\min_{\theta \in \Theta} [R_\mathrm{s}(\theta) + \lambda^\star \max_{d \in \mathcal{D}} R_d(\theta)]$.
The worst-case risk, i.e., $\max_{d \in \mathcal{D}} R_d(\theta)$, is still intractable in itself, but it is closely related to the degree to which the model $\hptheta$ relies on core features to predict $y$.
This is because for a diverse set of domains $\mathcal{D}$, leveraging non-core features would almost always lead to worse performance in certain domains~\citep{arjovsky2019invariant, geirhos2020shortcut}.
To minimize the worst-case risk, therefore, the model must only rely on core features to make prediction.

Suppose there is an oracle feature extractor $f_\mathrm{c}$ that returns a faithful representation of the core features of any input $x$.
As the core features may not always be clear,  $\forac$ can be viewed as some distribution over the core features for each $x$.
Let $\pc(y|x)$ be the optimal model that can be built upon $\forac$. The risk of $\hptheta$ \wrt $p_\mathrm{c}(y|x)$ on the training domain $d_\mathrm{s}$ is given by
\begin{equation}
    R_\mathrm{s}^\mathrm{c}(\theta) = \mathbb{E}_{x \sim p_\mathrm{s}} \mathbb{E}_{y \sim p_\mathrm{c}(y|x)} [-\log \hptheta].
\label{eq.wrm}
\end{equation}
Assuming $p_\mathrm{s}(x)$ is fairly diverse, the risk $R_\mathrm{s}^\mathrm{c}(\theta)$ measures the degree to which the model's prediction is based on the core features and thus can be viewed as a proxy for the worst-case risk.
Hence, we can replace $\max_{d \in \mathcal{D}} R_d(\theta)$ with $R_\mathrm{s}^\mathrm{c}(\theta)$ while still achieving a similar optimization effect.

In summary, we relax IDRM to the following DRM formulation:
\begin{equation}
    \label{eq:DRM}
    \min_{\theta \in \Theta} R_\mathrm{s}(\theta) + \lambda R_\mathrm{s}^\mathrm{c}(\theta) \tag{DRM}
\end{equation}
with some properly chosen $\lambda \geq 0$.
Now the risk $R_\mathrm{s}^\mathrm{c}(\theta)$ can also be interpreted as a regularization term for ERM to help preserve the core features.
In the following section, we demonstrate how to utilize zero-shot models like CLIP models~\citep{radford2021learning} to estimate $\pc(y|x)$, and subsequently how to apply DRM to robustly fine-tune the same CLIP models.

\section{Fine-tuning zero-shot models with DRM}
\label{sec.4}

\begin{figure*}[t]
		
    \centering
    \resizebox{1.0\textwidth}{!}{
		

	\begin{tabular}{cccc@{\hskip 10pt}@{\hskip 10pt}ccc}
		&{\scriptsize Original}  & {\scriptsize w/o BG} & {\scriptsize w/o FG} 	 &{\scriptsize Original} & {\scriptsize w/o BG} & {\scriptsize w/o FG} 	\\
		&\includegraphics[height=1.6cm]{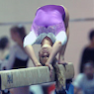} &
		\includegraphics[height=1.6cm]{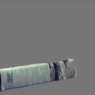} &
		\includegraphics[height=1.6cm]{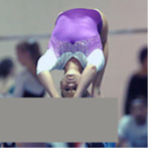} &
		
		\includegraphics[height=1.6cm]{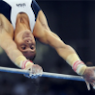} & 
		\includegraphics[height=1.6cm]{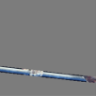} &
		\includegraphics[height=1.6cm]{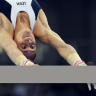} 
		 \\ \midrule
    {\scriptsize {\tt df}} &  \multicolumn{3}{c}{\scriptsize An image of balance beam.\hphantom{abc}} & \multicolumn{3}{c}{\scriptsize An image of gymnastic horizontal bar.\hphantom{ab}} \\
    {\scriptsize {\tt cd}} &  \multicolumn{3}{@{\hskip 3pt}l}{\scriptsize A long, thin piece of wood or metal elevated off the ground.} & \multicolumn{3}{c}{\scriptsize Long metal or wood bar held up by upright supports.\hphantom{ab}} \\ \midrule
		{\scriptsize {\tt df}} &{\scriptsize 0.367}&{\scriptsize 0.262 \textcolor{gray}{(--28.6\%)}} &{\scriptsize 0.395 \textcolor{gray}{(\hphantom{0}+7.6\%)}}&{\scriptsize 0.333}&{\scriptsize 0.267 \textcolor{gray}{(--19.8\%)}}&{\scriptsize {0.367 \textcolor{gray}{(+10.2\%)}}}\\   
		{\scriptsize {\tt cd}}&{\scriptsize 0.286}&{\scriptsize 0.281 \textcolor{gray}{(\hphantom{0}--1.7\%)}}&{\scriptsize 0.123 \textcolor{gray}{(--57.0\%)}}&{\scriptsize 0.258}&{\scriptsize {0.273 \textcolor{gray}{(\hphantom{0}+5.8\%)}}}&{\scriptsize {0.131 \textcolor{gray}{(--49.2\%)}}}\\


		\end{tabular}}
\caption{\textbf{Concept descriptions better capture core features than default prompts.} The affinities between images and default prompts ({\tt df}) are not stable \wrt changes in image background (BG) containing non-core features and are insensitive to changes in image foreground (FG) containing core features, as indicated by the relative changes (\textcolor{gray}{gray numbers} in {parentheses}) w.r.t. the affinities of the original images. In contrast, the affinities between images and concept descriptions ({\tt cd}) are stable \wrt to changes in BG while being highly responsive to changes in FG, making them a good detector for core features.
See Appendix \ref{appendix:more_examples} for more examples and a full quantitative study on this.}
\label{fig:aff}

\end{figure*}

Zero-shot models like CLIP typically consist of an image encoder $f_\phi$ and a text encoder $g_\psi$ with parameters $\theta=(\phi, \psi)$.
Image classification with such models is usually done by first creating a text prompt $t_y$ for each class label $y \in \mathcal{Y}$, and then assigning a probability for each $y$ to an image $x$ by
\begin{equation}
    \label{eq:clip-classifier}
     \hptheta = \frac{\exp(A_{\theta}(x,t_y)/\tau)}{
    \sum_{y' \in \mathcal{Y}} \exp(A_{\theta}(x,t_{y'})/\tau)
    },
\end{equation}
where $A_{\theta}(x,t_y)=\langle f_{\phi}(x),  g_{\psi}(t_y) \rangle$ and $\tau$ is the temperature.
The inner product $\langle f_{\phi}(x),  g_{\psi}(t_y) \rangle$ can be intuitively understood as the {\em affinity}
between $x$ and $t_y$, and we thus denote it as $A_{\theta}(x,t_y)$.

{The classifier~\eqref{eq:clip-classifier} was originally introduced by \citet{radford2021learning} for zero-shot classification.
We follow \citet{goyal2023finetune} to directly fine-tune this classifier, viewing the text embeddings $g_{\psi}(t_y)$ as the weights of a standard linear classification head for the image embeddings $f_{\phi}(x)$.
Unlike standard classifiers, however, \eqref{eq:clip-classifier} additionally depends on the text prompt $t_y$ of which the design is important.}


\subsection{Dual prompts for fine-tuning zero-shot models}
Let $\mathcal{T} = \{t_y \,|\, y \in \mathcal{Y}\}$ be the set of text prompts used to construct the classifier $\ptheta$ according to Eq.~\eqref{eq:clip-classifier}.
For such zero-shot classifiers, the general DRM objective~\eqref{eq:DRM} becomes
\begin{equation}
    \label{eq:CLIP-DRM-general-prompts}
    \min_{\theta \in \Theta} R_\mathrm{s}(\theta; \mathcal{T}) + \lambda R_\mathrm{s}^\mathrm{c}(\theta; \mathcal{T}),
\end{equation}
where the risks $R_\mathrm{s}$ and $R_\mathrm{s}^\mathrm{c}$ not only depend on the model parameters $\theta$ but also on the prompts $\mathcal{T}$.
For $R_\mathrm{s}$, usually a set of \emph{default prompts}, $\mathcal{T}^{\tt df} = \{t_y^{\tt df} \,|\, y \in \mathcal{Y}\}$, like ``\textit{an image of [class name]}\,'' is used~\citep{goyal2023finetune, oh2023towards}.
However, such prompts may not be suitable for $R_\mathrm{s}^\mathrm{c}$ as they are not specifically designed to bind with core features. In fact, as we will show, default prompts elicit representation that is biased towards non-core features.
It is also know that non-visual and spurious descriptions contribute significantly to CLIP's representation \citep{esfandiarpoor2024if}.


{To generate better prompts for $R_\mathrm{s}^\mathrm{c}$, we ask GPT-4~\citep{achiam2023gpt} to describe the \textit{core visual features} of each class, producing a set of {\em concept descriptions}, $\mathcal{T}^{\tt cd} = \{t_y^{\tt cd} \,|\, y \in \mathcal{Y}\}$.}
{For example, to generate a concept description for \textit{cougar},
{we prompt GPT-4 with}
\begin{adjustwidth}{0.2cm}{0.2cm}
\begin{quote}``\textit{Q: Generate a short sentence that describes the visual features of Cougar. Do not include its function, its surroundings, or the environment it usually inhabits. The sentence should be concise. For example, [goldfish: a long, golden body with back fins].}''
\end{quote}
\end{adjustwidth}
and the concept description returned is
\begin{adjustwidth}{0.2cm}{0.2cm}
\begin{quote}``\textit{a large, tawny cat with a muscular build and a small head.}''~\footnote{More details about concept description generation can be found in 
 {Appendix \ref{appendix:cd_creation}}.}
\end{quote}
\end{adjustwidth}}
Figuratively, the text embedding $g_{\psi}(t^{\tt cd}_{y})$ of the concept description represents the core features of the class from the text side. We use it to ``pull out'' the core features from the image embedding $f_{\phi}(x)$ via inner-product.
{As illustrated in Figure~\ref{fig:aff}, the 
affinity between an image and its concept description is indeed a much better measure of the significance of core visual features.
Regarding this point, a full quantitative study can be found in Appendix \ref{appendix:more_examples}.}

{Together, the two sets of prompts naturally give rise to the following objective:
\begin{equation}
    \min_{\theta \in \Theta} R_\mathrm{s}(\theta; \mathcal{T}^{\tt df}) + \lambda R_\mathrm{s}^\mathrm{c}(\theta; \mathcal{T}^{\tt cd}),
    \label{drm_loss}
\end{equation}
where we use default prompts $\mathcal{T}^{\tt df}$ for ERM and concept descriptions $\mathcal{T}^{\tt cd}$ for WRM.
We do not use concept descriptions for ERM because it would erode the text-side core feature representation elicited by concept descriptions.
This is supported by empirical evidence from our ablation study (Section~\ref{sec:ablation}) showing that \eqref{drm_loss} works best among various alternatives.}

The dual prompts elicit separate predictions from the same model for the two sub-objectives of DRM.
The ERM part, $R_\mathrm{s}(\theta; \mathcal{T}^{\tt df})$, is supervised by regular one-hot labels.
The WRM part, $R_\mathrm{s}^\mathrm{c}(\theta; \mathcal{T}^{\tt cd})$, is supervised by $\pc(y|x)$ as in \eqref{eq.wrm}.
Next, we show how the same set of concept descriptions $\mathcal{T}^{\tt cd}$ can be used to obtain a good estimate of $\pc(y|x)$.

\subsection{Estimating \texorpdfstring{$\pc(y|x)$}{p\_c(y|x)} with concept descriptions}
\label{sec:applying_drm}
Recall that the oracle model $\pc(y|x)$ is based on a faithful
representation of core features.
{Since we have demonstrated that concept descriptions bind well with core features on pre-trained CLIP models, a direct estimate for $\pc(y|x)$ can be  obtained via \eqref{eq:clip-classifier} with $t_y \gets t_y^{\tt cd}$ and $\theta \gets \theta_0$ where $\theta_0=(\phi_0, \psi_0)$ denote the pre-trained CLIP parameters.}

{However, there is a crucial caveat.
To illustrate, consider an image $x$ of class $y$.
For another class $y'$ whose core features are \emph{not} present in $x$, the affinity $A_{\theta_0}(x,t^{
\tt cd}_{y'})$ should ideally be very small.
In practice, however, we find this is seldom the case.
These extraneous affinity values, which we call \textit{artifact terms}, often vary among classes and lead to poor estimates of $\pc(y|x)$ with high entropy.}


{To mitigate the impact of artifact terms, we perform a simple min-max normalization on $\xi(x, y) = \exp(A_{\theta_0}(x,t_y^{\tt cd}) / \tau)$ \wrt all training images ${\cal X}_y \subseteq {\cal X}$ labeled the same class $y$, as follows:}
\begin{equation}
    \gamma(x,y)= \frac{\xi(x,y)-\min_{x' \in {\cal X}_y}\xi(x',y)}{\max_{x' \in {\cal X}_y} \xi(x',y)-\min_{x' \in {\cal X}_y}\xi(x',y)}.
\end{equation}
{This effectively adjusts the affinity range of each class, reducing the difference in the artifact terms of different classes.}
Based on the normalization, the final estimation we propose for $\pc(y|x)$ is
\begin{equation}
    \tilde{p}_\mathrm{c}(y|x)  =
    \begin{cases}
       \gamma(x, y), &y= y_x;\\
        [1 - \gamma(x, y_x)] \cdot \frac{\gamma(x, y)}{
    \sum_{y' \neq y_x} \!\gamma(x, y')},
       &
       y\neq y_x;
    \end{cases}
    \label{eq:proxy}
\end{equation}
where $y_x$ is the ground-truth label of $x$.
{This ensures that for every class \( y \), there exists at least one \( x \in \mathcal{X}_y \) for which \( \tilde{p}_\mathrm{c}(y|x) = 1 \), promoting balanced learning.
When \( \tilde{p}_\mathrm{c}(y|x) < 1 \) for $y=y_x$, the remaining probability is distributed to other classes $y\neq y_x$ according to the relative scale of the respective affinities.}

{We pre-compute the estimate $\tilde{p}_\mathrm{c}(y|x)$ with the pre-trained CLIP model $\theta_0$ before fine-tuning{, which is now the learning target of $R_\mathrm{s}^\mathrm{c}(\theta; \mathcal{T}^{\tt cd})$.}}
Since the computation requires the class labels, $\tilde{p}_\mathrm{c}(y|x)$ can only be used for training (not for inference). Intuitively, learning from $\tilde{p}_\mathrm{c}(y|x)$ can be seen as a form of self-distillation targeted at core features.
In comparison, standard self-distillation methods \citep{furlanello2018born,zhang2019your,ji2021refine,zhang2021self} are largely indifferent to what specific information should be distilled.

\subsection{Inference}
The fine-tuning objective~\eqref{drm_loss} involves two classifiers: the ERM classifier $\hat{p}^{\tt df}_{\theta}(y|x)$ induced by $\mathcal{T}^{\tt df}$, and the WRM classifier $\hat{p}^{\tt cd}_{\theta}(y|x)$ induced by $\mathcal{T}^{\tt cd}$.
While either alone can be used for inference, we find that their mixture,
\begin{equation}
\hat{p}^{\tt dual}_{\theta}(y|x) = \beta \cdot \hat{p}^{\tt df}_{\theta}(y|x) + (1-\beta) \cdot \hat{p}^{\tt cd}_{\theta}(y|x),
\label{eq:inference}
\end{equation}
where $\beta \in (0, 1)$, performs the best.
This is expected as \eqref{eq:inference} essentially combines ERM with WRM as depicted in Figure~\ref{fig:clarity-and-sota}.
By default, we set $\beta={1}/{(1+\lambda)}$ so to be as consistent with \eqref{drm_loss} as possible.

\section{Experiments}
\label{sec:experiments}

In this section, we evaluate DRM on multiple real-world benchmarks and conduct ablation studies to assess the impacts of various design choices.
We conduct our experiments on three varying sizes of pre-trained CLIP models: ViT-B/16, ViT-L/14 and ViT-L/14@336 \citep{radford2021learning}.
Finally, we analyze the reliability of LLM-generated concept descriptions and the impact of $\lambda$ on performance.

\subsection{Setup}
\label{sec:imple}

\paragraph{Datasets.}
\textsc{ImageNet} \citep{deng2009imagenet} comprises over a million natural images across 1,000 classes.
We use the training set for fine-tuning and the validation set for assessing ID accuracy.
For OOD evaluation, we consider ImageNet variants: \textsc{ImageNet-V2} \citep{recht2019imagenet}, \textsc{ImageNet-R} \citep{hendrycks2021many}, \textsc{ImageNet-Sketch} \citep{wang2019learning2}, \textsc{ImageNet-A} \citep{hendrycks2021natural}, and \textsc{ObjectNet} \citep{barbu2019objectnet}.
We report accuracy for both ID/OOD performance.

\textsc{WILDS-iWildCam (iWildCam)} \citep{koh2021wilds} contains camera-trap images for wildlife classification, with training images from 200 locations and OOD images from different locations. Both ID and OOD performances are measured using macro F1 scores.

\textsc{WILDS-FMoW (FMoW)}~\citep{koh2021wilds} is a dataset of satellite images from different years and continents for land use prediction. The dataset is split into training, validation, and testing domains based on the year of collection. There is also a notable shift between different continents.
We report the ID testing accuracy and the worst-region OOD testing accuracy.

\textsc{Dollar Street-DA} and \textsc{GeoYFCC-DA}~\citep{prabhu2022can}  are datasets for testing model generalization from images in specific countries to new ones. For Dollar Street-DA, training images are from North America and Europe, with testing images from other continents. GeoYFCC-DA has a similar setup. Model effectiveness is measured by accuracy in seen and unseen countries.

\paragraph{Baseline methods.}

The key baseline we compare our DRM method with is \textbf{FLYP} \citep{goyal2023finetune}.  
Following the FLYP paper, we include several baselines that do not utilize the text encoder. These methods are \textbf{LP} (linear probing), \textbf{FT} (fine-tuning), \textbf{L2-SP} \citep{li2018explicit}, and \textbf{LP-FT} \citep{kumar2022fine}.  In addition, we incorporate some more recent fine-tuning methods for zero-shot vision models.
We also consider combining the weight-space averaging method, \textbf{WiSE-FT} \citep{wortsman2022robust}, with DRM and the baselines.
For more introduction to these methods, see Appendix~\ref{sec:recent_related-work}.




\paragraph{Implementation details.}
We update both the image encoder and text encoder during fine-tuning, following FLYP~\citep{goyal2023finetune}.
Furthermore, FLYP uses the CLIP contrastive loss \citep{radford2021learning} instead of the standard cross-entropy loss. We adopt this approach for the ERM part of DRM to facilitate comparison.
In short, when the hyperparameter $\lambda=0$ in the DRM objective \eqref{drm_loss}, the WRM loss term vanishes and DRM reduces to exactly FLYP.


We choose all hyperparameters of DRM and baseline methods based on the performance on the ID validation set, i.e., training-domain validation~\citep{gulrajani2021in}.
The hyperparameter $\lambda$ of DRM is picked from $\{1, 2, 3, 4, 5\}$.
More implementation details are presented in Appendix~\ref{app:traing_details}.




\begin{table}[t]
\setlength{\tabcolsep}{4pt}
\centering
\caption{ID and OOD performances of DRM and baselines methods on CLIP ViT-B/16, with and without WiSE-FT. 
Best performances are highlighted in \textbf{bold}. For \textsc{ImageNet}, we report the average performance over its 5 OOD test sets.  Results on individual test sets are provided in Appendix \ref{detail_imagenet}. }

\vskip 1em

\resizebox{\textwidth}{!}{%
\begin{tabular}{cc@{\hskip 6pt}cc@{\hskip 6pt}cc@{\hskip 6pt}cc@{\hskip 6pt}cc@{\hskip 6pt}cc@{\hskip 6pt}c}
\toprule
& \multicolumn{4}{c}{\textsc{ImageNet}} & \multicolumn{4}{c}{\textsc{iWildCam}} & \multicolumn{4}{c}{\textsc{FMoW}} \\
\cmidrule(lr){2-5} \cmidrule(lr){6-9} \cmidrule(lr){10-13}
& \multicolumn{2}{c}{w/o WiSE-FT} & \multicolumn{2}{c}{WiSE-FT} & \multicolumn{2}{c}{w/o WiSE-FT} & \multicolumn{2}{c}{WiSE-FT} & \multicolumn{2}{c}{w/o WiSE-FT} & \multicolumn{2}{c}{WiSE-FT} \\
\cmidrule(lr){2-3} \cmidrule(lr){4-5} \cmidrule(lr){6-7} \cmidrule(lr){8-9} \cmidrule(lr){10-11} \cmidrule(lr){12-13}
Method & ID & OOD & ID & OOD & ID & OOD & ID & OOD & ID & OOD & ID & OOD \\
\midrule
0-shot & 68.3{\scriptsize $\pm${0.0}} & 58.7{\scriptsize $\pm${0.0}} & - & - & \hphantom{0}8.7{\scriptsize $\pm${0.0}} & 11.0{\scriptsize $\pm${0.0}} & - & - & 20.4{\scriptsize $\pm${0.0}} & 18.7{\scriptsize $\pm${0.0}} & - & - \\
LP & 79.9{\scriptsize $\pm${0.0}} & 57.2{\scriptsize $\pm${0.0}} & 80.0{\scriptsize $\pm${0.0}} & 58.3{\scriptsize $\pm${0.0}} & 44.5{\scriptsize $\pm${0.6}} & 31.1{\scriptsize $\pm${0.4}} & 45.5{\scriptsize $\pm${0.6}} & 31.7{\scriptsize $\pm${0.4}} & 48.2{\scriptsize $\pm${0.1}} & 30.5{\scriptsize $\pm${0.3}} & 48.7{\scriptsize $\pm${0.1}} & 31.5{\scriptsize $\pm${0.3}} \\
FT & 81.4{\scriptsize $\pm${0.1}} & 54.8{\scriptsize $\pm${0.1}} & 82.5{\scriptsize $\pm${0.1}} & 61.3{\scriptsize $\pm${0.1}} & 48.1{\scriptsize $\pm${0.5}} & 35.0{\scriptsize $\pm${0.5}} & 48.1{\scriptsize $\pm${0.5}} & 35.0{\scriptsize $\pm${0.5}} & 68.5{\scriptsize $\pm${0.1}} & 39.2{\scriptsize $\pm${0.7}} & 68.5{\scriptsize $\pm${0.1}} & 41.5{\scriptsize $\pm${0.5}} \\
L2-SP & 81.6{\scriptsize $\pm${0.1}} & 57.9{\scriptsize $\pm${0.1}} & 82.2{\scriptsize $\pm${0.1}} & 58.9{\scriptsize $\pm${0.1}} & 48.6{\scriptsize $\pm${0.4}} & 35.3{\scriptsize $\pm${0.3}} & 48.6{\scriptsize $\pm${0.4}} & 35.3{\scriptsize $\pm${0.3}} & 68.6{\scriptsize $\pm${0.1}} & 39.4{\scriptsize $\pm${0.6}} & 68.4{\scriptsize $\pm${0.1}} & 40.3{\scriptsize $\pm${0.6}} \\
LP-FT & 81.8{\scriptsize $\pm${0.1}} & 60.5{\scriptsize $\pm${0.1}} & 82.1{\scriptsize $\pm${0.1}} & 61.8{\scriptsize $\pm${0.1}} & 49.7{\scriptsize $\pm${0.5}} & 34.7{\scriptsize $\pm${0.4}} & 50.2{\scriptsize $\pm${0.5}} & 35.7{\scriptsize $\pm${0.4}} & 68.4{\scriptsize $\pm${0.2}} & 40.4{\scriptsize $\pm${1.0}} & 68.5{\scriptsize $\pm${0.2}} & 42.4{\scriptsize $\pm${0.7}} \\
FLYP & \textbf{82.6{\scriptsize $\pm${0.0}}} & 60.2{\scriptsize $\pm${0.1}} & \textbf{82.9{\scriptsize $\pm${0.0}}} & 63.2{\scriptsize $\pm${0.1}} & 52.2{\scriptsize $\pm${0.6}} & 35.6{\scriptsize $\pm${1.2}} & 52.5{\scriptsize $\pm${0.6}} & 37.1{\scriptsize $\pm${1.2}} & 68.6{\scriptsize $\pm${0.2}} & 41.3{\scriptsize $\pm${0.8}} & \textbf{68.9{\scriptsize $\pm${0.3}}} & 42.0{\scriptsize $\pm${0.9}} \\
\midrule
DRM & 82.0{\scriptsize $\pm${0.3}} & \textbf{63.2{\scriptsize $\pm${0.2}}} & 82.4{\scriptsize $\pm${0.2}} & \textbf{64.0{\scriptsize $\pm${0.2}}} & \textbf{54.1{\scriptsize $\pm${0.5}}} & \textbf{40.0{\scriptsize $\pm${0.6}}} & \textbf{55.3{\scriptsize $\pm${0.4}}} & \textbf{41.4{\scriptsize $\pm${0.7}}} & \textbf{68.7{\scriptsize $\pm${0.3}}} & \textbf{45.9{\scriptsize $\pm${1.1}}} & 68.7{\scriptsize $\pm${0.2}} & \textbf{46.1{\scriptsize $\pm${0.8}}} \\
\bottomrule
\end{tabular}%
}
\label{tab:combined_performance}
\end{table}

\begin{table}
\centering
\caption{ID and OOD performances of DRM and FLYP on two larger CLIP models.}

\vskip 1em
\resizebox{0.86\textwidth}{!}{
\begin{tabular}{llc@{\hskip 4.5pt}cc@{\hskip 4.5pt}cc@{\hskip 4.5pt}c}
\toprule
&  & \multicolumn{2}{c}{\textsc{ImageNet}} & \multicolumn{2}{c}{\textsc{iWildCam}} & \multicolumn{2}{c}{\textsc{FMoW}} \\ 
\cmidrule(lr){3-4} \cmidrule(lr){5-6} \cmidrule(lr){7-8}
Pre-trained model &Method & ID & OOD & ID & OOD & ID & OOD \\ 
\midrule
\multirow{4}[2]{*}{CLIP ViT-L/14} 
& FLYP & 84.6{\scriptsize $\pm$0.3} & 73.4{\scriptsize $\pm$0.1} & 56.0{\scriptsize $\pm$ 1.1}& 41.9{\scriptsize $\pm$0.7} & 71.2{\scriptsize $\pm$0.5} & 48.2{\scriptsize $\pm$0.5} \\
& FLYP+WiSE-FT & 85.1{\scriptsize $\pm$0.2} & 75.1{\scriptsize $\pm$0.1} & 57.2{\scriptsize $\pm$0.7} & 42.1{\scriptsize $\pm$0.5} & \textbf{72.0}{\scriptsize $\pm$0.4} & 49.1{\scriptsize $\pm$0.6} \\
\cmidrule(lr){2-8}
& DRM & 85.0{\scriptsize $\pm$0.2} & 75.5{\scriptsize $\pm$0.2} & \textbf{61.8}{\scriptsize $\pm$0.5} & 49.2{\scriptsize $\pm$0.4} & 70.9{\scriptsize $\pm$0.8} & \textbf{51.3}{\scriptsize $\pm$0.7} \\
& DRM+WiSE-FT & \textbf{86.2}{\scriptsize $\pm$0.1} & \textbf{76.2}{\scriptsize $\pm$0.2}  & {61.6}{\scriptsize $\pm$0.3} & \textbf{49.8}{\scriptsize $\pm$0.4} & 71.4{\scriptsize $\pm$0.5} & \textbf{51.3}{\scriptsize $\pm$0.7}  \\
\midrule
\multirow{4}[2]{*}{CLIP ViT-L/14@336}
& FLYP & 85.4{\scriptsize $\pm$0.2} & 75.0{\scriptsize $\pm$0.3} & 58.7{\scriptsize $\pm$0.6}& 45.4{\scriptsize $\pm$1.0}  &{72.5{\scriptsize $\pm$0.3}}& {50.5{\scriptsize $\pm$0.5}}\\
& FLYP+WiSE-FT & 86.1{\scriptsize $\pm$0.2}& 75.9{\scriptsize $\pm$0.2}& 60.5{\scriptsize $\pm$0.5}& 47.1{\scriptsize $\pm$1.2}& {72.6{\scriptsize $\pm$0.3}}& {50.7{\scriptsize $\pm$0.6}}\\
\cmidrule(lr){2-8}
& DRM & 85.9{\scriptsize $\pm$0.1} & 76.0{\scriptsize $\pm$0.2} & \textbf{62.8}{\scriptsize $\pm$0.6} & 51.4{\scriptsize $\pm$0.5} & \textbf{73.8}{\scriptsize $\pm$0.5} & 52.5{\scriptsize $\pm$0.9} \\
& DRM+WiSE-FT & \textbf{87.4}{\scriptsize $\pm$0.0}  & \textbf{77.1}{\scriptsize $\pm$ 0.2} & 62.5{\scriptsize $\pm$0.4} & \textbf{51.8}{\scriptsize $\pm$0.5} & \textbf{73.8}{\scriptsize $\pm$0.3} & \textbf{53.1}{\scriptsize $\pm$0.6} \\
\bottomrule
\end{tabular}}
\label{tab:big_distribution_shift_table3}
\end{table}

\subsection{Main results}

We report the main results on \textsc{ImageNet}, \textsc{iWildCam}, and \textsc{FMoW} in Table \ref{tab:combined_performance} and \ref{tab:big_distribution_shift_table3}.
The results on \textsc{Dollar Street-DA} and \textsc{GeoYFCC-DA} can be found in Appendix \ref{result_da}.
We also compare DRM with some concurrent methods in Appendix \ref{result_morebaseline}.
All performance statistics, except some reported by previous papers (which we simply reuse), are averaged over 5 runs with different random seeds. The 95\% confidence intervals over the 5 runs are reported.


Table \ref{tab:combined_performance} shows the results on CLIP ViT-B/16, the smallest of the three CLIP models.  
DRM achieves consistently better OOD performance than the baselines across all datasets, with and without WiSE-FT.
Without WiSE-FT, DRM attains 5.0\%, 12.4\%, and 11.1\% relative improvements over the best baseline method, FLYP, on the three benchmarks respectively.
With WiSE-FT, the improvements remain significant at 1.9\%, 11.6\%, and 9.8\% respectively.
In terms of ID performance, DRM is roughly on par with FLYP, with a notable advantage on \textsc{iWildCam}.

Table~\ref{tab:big_distribution_shift_table3}
compares the performance of DRM and FLYP on two larger CLIP models.
DRM again consistently outperforms FLYP in all cases.
The previous state-of-the-art OOD performance for \textsc{iWildCam} and \textsc{FMoW} are 47.1 and 50.6 respectively, both achieved by FLYP+WiSE-FT with 
CLIP ViT-L/14@336.
DRM improves those scores by 10.0\% and 5.0\% to 51.8 and 53.1 respectively.

Although DRM incurs more computational costs compared to FLYP due to an additional pass through the text encoder, the training and inference costs for DRM only increase by about 20\% from FLYP (see Appendix~\ref{app:computational-costs}).
This cost is insignificant relative to the performance gain.

Overall, our experiments suggest that DRM is highly effective in the robust fine-tuning of zero-shot models, outperforming previous methods by a large margin while maintaining scalability.
As a bonus, we show in Appendix \ref{result:resnet50} that DRM is also effective in fine-tuning ImageNet pre-trained CNNs.

\subsection{Ablation study}
\label{sec:ablation}






We conduct our ablation study with CLIP ViT-L/14 on \textsc{iWildCam}. The main results are reported in Table \ref{tab:ablation} and discussed below.
For additional results and details, see Appendix \ref{full_ablation} and \ref{appendix:drm_ft}.

\begin{table}[t]
    \centering
    \caption{Results of ablation studies on DRM with CLIP ViT-L/14 performance and \textsc{iWildCam}. We use ``{\tt df}'' and ``{\tt cd}'' to denote the type of text prompts used to produce model predictions. ``{\tt dual}'' refers to the mixture model \eqref{eq:inference} for inference. ``--'' means the corresponding loss term is not in use.}
    \vskip 1em
    \resizebox{0.65\textwidth}{!}{
    \begin{tabular}{ccccccccc}
        \toprule
        & \multirow{2}{*}{ERM} & \multirow{2}{*}{WRM} & \multirow{2}{*}{Affinity} & \multirow{2}{*}{Inference} & \multicolumn{2}{c}{Performance} \\
        \cmidrule(lr){6-7}
        Row & $R_\mathrm{s}(\theta; \mathcal{T})$ & $R_\mathrm{s}^\mathrm{c}(\theta; \mathcal{T})$ & norm.  & w/ model  & ID & OOD \\
        \midrule
        1   & \multirow{3}{*}{{\tt df}}   &\multirow{3}{*}{{\tt cd}}  & \multirow{3}{*}{\footnotesize{\Checkmark}}  & {\tt dual} & \textbf{61.8} & \textbf{49.2} \\
        2 & &&   &{\tt df} & 60.4 &45.1\\
        3 & &&  &{\tt cd}&54.8 &47.2\\ \midrule
        
        4   & {\tt df}   & -- & --\vphantom{\Checkmark}  & {\tt df} & 56.0&41.9 \\
         5   & {\tt cd} & --& --\vphantom{\Checkmark} & {\tt cd} & 56.9&43.4 \\ \midrule

       6   & --     & {\tt df} & \footnotesize{\Checkmark}    & {\tt df} & 52.4&45.3 \\
         7  & --     & {\tt cd} & \footnotesize{\Checkmark}    & {\tt cd} &51.7 & 46.3 \\ \midrule

        8  & {\tt df}&{\tt df}& \footnotesize{\Checkmark}  & {\tt df} & 54.4& 45.1\\
         9  &  {\tt cd}   & {\tt cd}  & \footnotesize{\Checkmark}  & {\tt cd} & 54.0& 46.0  \\\midrule

10   & {\tt df}   & {\tt cd}   & \multirow{1}{*}{\footnotesize{\XSolidBrush}} & {\tt dual} & 32.1 & 24.2 \\ 
         
        \bottomrule
    \end{tabular}}
    \label{tab:ablation}
\end{table}


\paragraph{Impact of dual risks.}
The DRM objective~\eqref{drm_loss} consists of an ERM term, $R_\mathrm{s}(\theta; \mathcal{T}^{\tt df})$, and a WRM term, $R_\mathrm{s}^\mathrm{c}(\theta; \mathcal{T}^{\tt cd})$.
From Table~\ref{tab:ablation}, we can see that, in terms of OOD performance, models fine-tuned with only the ERM term (Rows 4 \& 5) significantly underperform models fine-tuned with both terms (Rows 1-3).
Conversely, models fine-tuned with only the WRM term (Rows 6 \& 7) have much worse ID performance.
Although the OOD performance is improved, there is still a large gap from the DRM model (Row 1).
Note that these hold regardless of the type of text prompts used by ERM/WRM.

\paragraph{Impact of dual prompts for fine-tuning.}
With both ERM and WRM in effect, we further investigate the impact of their prompts during fine-tuning.
DRM uses two sets of prompts: default prompts $\mathcal{T}^{\tt df}$ for ERM, and concept descriptions $\mathcal{T}^{\tt cd}$ for WRM.
Rows 8 \& 9 of Table \ref{tab:ablation} show the performance of models fine-tuned using the same set of prompts for ERM and WRM.
In either case, the model underperforms DRM (Row 1), validating the use of tailored prompts for specific learning targets.

Intuitively, $\mathcal{T}^{\tt cd}$ is more aligned with WRM than $\mathcal{T}^{\tt df}$ as the learning targets \eqref{eq:proxy} for WRM are based on the predictions of the pre-trained model prompted by $\mathcal{T}^{\tt cd}$.
So, reducing the WRM term $R_\mathrm{s}^\mathrm{c}(\theta; \mathcal{T}^{\tt cd})$ in some sense limits the divergence of the predictions between the pre-trained models and the fine-tuned models, hence helping better preserve pre-trained (core) features.
This explains why using $\mathcal{T}^{\tt cd}$ for both ERM and WRM would lead to poorer outcomes (Row 1 \vs 9).
ERM targets, typically one-hot class labels, do not capture subtle core visual differences.
As a result, ERM would weaken the bond between $\mathcal{T}^{\tt cd}$ and core visual features as perceived by the model during fine-tuning, and therefore reduce the power of WRM in preserving those features.

Interestingly, comparing Row 1 \& 8, we find that $\mathcal{T}^{\tt cd}$ not only improves OOD performance but also ID performance. We hypothesize that this is because the core features are better preserved with $\mathcal{T}^{\tt cd}$ and thus the fine-tuned model relies on a more diverse set of features to make predictions, reducing overfitting and improving ID generalization as well.
\paragraph{Impact of dual prompts for inference.}
After DRM fine-tuning, we obtain a new CLIP model with updated parameters $\theta$.
Since the model is fine-tuned with dual prompts, it is natural to use the same dual prompts for inference, as in \eqref{eq:inference}.
Indeed, we find that $\hat{p}^{\tt dual}_{\theta}$ (Row 1) outperforms $\hat{p}^{\tt df}_{\theta}$ (Row 2) and $\hat{p}^{\tt cd}_{\theta}$ (Row 3) in Table~\ref{tab:ablation}.
In particular, while $\hat{p}^{\tt df}_{\theta}$ is better than $\hat{p}^{\tt cd}_{\theta}$ in-distribution and the latter is better out-of-distribution, they underperform $\hat{p}^{\tt dual}_{\theta}$ on both ID and OOD fronts.
This is similar to the phenomenon we have observed in the fine-tuning scenario. Dual prompts generally reduce overfitting and the effect carries over from fine-tuning to inference.

\paragraph{Impact of affinity normalization.}
In Section~\ref{sec:applying_drm}, we pointed out a problematic issue regarding the direct estimate for $\pc(y|x)$ obtained via \eqref{eq:clip-classifier} with $t_y \gets t_y^{\tt cd}$ and $\theta \gets \theta_0$.
To address the issue, we proposed another estimate $\tilde{p}_\mathrm{c}(y|x)$ in \eqref{eq:proxy} based on normalized affinities.
Comparing the two approaches, Row 10 of Table \ref{tab:ablation} shows that the direct estimate leads to severe degradation in both ID and OOD performance, demonstrating the importance of affinity normalization.

\subsection{Reliability of LLM-generated concept descriptions}
\label{subsec:reliability}

\paragraph{Consistency across repeated generations.}
The concept descriptions used in our experiments are generated by GPT-4.
Since the generation process is stochastic, it might impact the performance of DRM.
To evaluate this impact, we repeatedly ask GPT-4 to generate a concept description for each \textsc{iWildCam} class for three times and then find the standard deviation of the resulting image-text affinities.
The average standard deviation over 20,000 randomly sampled images of \textsc{iWildCam} is 0.0061, which is very small compared to the mean affinity, 0.2659.
This suggests that the affinities are robust to the randomness in the generation process, which is therefore unlikely to have any noticeable impact on the performance of DRM.
Examples of the generated descriptions in Appendix~\ref{appendix:gpt4_consi} show that the same core visual features are described quite consistently across generations.

\paragraph{Generation across different LLMs.}
We also experiment with different LLMs of various sizes (from 8B to over 400B parameters) to generate concept descriptions.
The results are reported in Table~\ref{tab:different_llms}.
While larger and more advanced models lead to better performance, DRM is not sensitive to the specific choice of LLM for generating the concept descriptions.
Even with a relatively small LLM, Llama-3 (8B), DRM still maintains a significant edge over FLYP on \textsc{iWildCam}.
Examples of the concept descriptions generated by the LLMs are provided in Appendix~\ref{appendix:diff_llms}.

\begin{table}[t]    
    \begin{minipage}{.47\textwidth}
        \caption{Performance of fine-tuned CLIP ViT-L/14 on \textsc{iWildCam} with concept descriptions generated by different LLMs of various sizes.}
        \label{tab:different_llms}
        \vskip 1em
        \begin{tabular}{llcc}
         \toprule
            Method &	LLM (\#params)   &	ID &	OOD  \\\midrule
            FLYP   &	N/A              & 52.2 &	35.6 \\
            \cmidrule(lr){1-4}
            DRM    &	GPT-3.5 (20B?) & 53.4 &	38.7 \\
            DRM    &	GPT-4   (>1T?)  & 54.1 &	40.0 \\
            \cmidrule(lr){1-4}
            DRM    &	Llama-3 (8B)    & 53.8 &	39.2 \\
            DRM    &	Llama-3 (70B)   & 54.0 &	39.9 \\
            DRM    &	Llama-3 (405B)  & 53.9 &	40.5\\
            \bottomrule
        \end{tabular}
        \vspace{2mm}
    \end{minipage}%
    \hfill
    \begin{minipage}{.45\textwidth}
        \setlength{\tabcolsep}{8pt}
        \centering
        \caption{Performance of DRM under different $\lambda$ on \textsc{iWildCam} and \textsc{ImageNet} with CLIP ViT-L/14 and  CLIP ViT-B/16 respectively.}
        \label{tab:performance_lambda}
        \vskip 1em
                
                
            \begin{tabular}{cc@{\hskip 8pt}cc@{\hskip 8pt}c}
                \toprule
                & \multicolumn{2}{c}{\textsc{iWildCam}} & \multicolumn{2}{c}{\textsc{ImageNet}} \\
                \cmidrule(lr){2-3} \cmidrule(lr){4-5}
                $\lambda$ & ID   & OOD  & ID   & OOD  \\
                \midrule
                0.0   & 56.0 & 41.9 & \textbf{82.6} & 60.2 \\
                1.0   & 59.1 & 47.3 & 81.5 & 62.5 \\
                2.0   & 60.0 & 48.1 & 81.8 & 63.1 \\
                3.0   & \textbf{61.8} & \textbf{49.2} & 82.0 & 63.2 \\
                4.0   & 60.9 & 48.6 & 81.9 & \textbf{63.4} \\
                5.0   & 60.1 & 48.5 & 81.7 & 63.3 \\
                \bottomrule
            \end{tabular}
            
            
    \end{minipage}%
\end{table}


\subsection{Study on the effect of \texorpdfstring{$\lambda$}{lambda} in DRM}

As in \eqref{drm_loss}, DRM involves a hyperparameter $\lambda$ that balances the weight of the empirical risk and the worst-case risk.
When $\lambda=0$, only the empirical risk is involved during fine-tuning, resulting in an ERM model.
As $\lambda$ increases, the influence of the empirical risk reduces, and the resulting model becomes closer to a WRM model.
In practice, we choose the value of $\lambda > 0$ based on ID validation, which is often positively correlated with OOD performance~\citep{taori2020measuring,miller2021accuracy}.
In Table~\ref{tab:performance_lambda}, we show the ID and OOD performance of CLIP ViT-L/14 fine-tuned on \textsc{iWildCam} and CLIP ViT-B/16 fine-tuned on \textsc{ImageNet} under different choices of $\lambda$.




Compared to FLYP ($\lambda=0$), the results indicate that DRM maintains high-level OOD performance across $\lambda$ between 1 and 5, suggesting that DRM is fairly insensitive to the choice of $\lambda$.
Furthermore, it is interesting to note that ID and OOD performance improve together on \textsc{iWildCam} as $\lambda$ increases to 3.0. Our further analysis reveals that this occurs because DRM assists in reducing overfitting—there is a decrease in training accuracy (from 88.29 to 87.41) and an increase in validation accuracy (from 81.64 to 82.43) as $\lambda$ increases (from 1 to 3)—thereby enhancing both ID and OOD performance.
While DRM does not always improve ID performance (e.g., on \textsc{ImageNet}), the result suggests that it still achieves a good trade-off between ID and OOD performance (-0.6 ID performance for +3.2 OOD performance), which in turn could lead to a better Pareto front.



\section{Conclusion}
In conclusion, this paper introduces dual risk minimization (DRM), a novel method that enhances the robustness of models fine-tuned from zero-shot foundation models against distribution shifts. DRM combines ERM with WRM, focusing on the preservation of core features which essentially define the target classes. To guide the fine-tuning process, DRM utilizes concept descriptions generated by LLMs like GPT-4. By balancing expected and worst-case performance, DRM overcomes the traditional limitations of ERM and achieves significant OOD performance improvements on multiple real-world benchmarks, establishing a new state of the art.
Potential future directions include a deeper theoretical investigation into DRM as a general principle, using DRM to improve the general robustness of zero-shot models across a broad range of tasks, and better understanding the role of concept descriptions in vision-language modeling.




\newpage
\section*{Acknowledgements}
This research was supported in part by
Hong Kong Research Grants Council under grant 16204920.
Kaican Li and Weiyan Xie were supported in part by the Huawei PhD Fellowship Scheme. Ricardo Silva acknowledges support of the UKRI AI programme, and the Engineering and Physical Sciences Research Council, for CHAI - EPSRC AI Hub for Causality in Healthcare AI with Real Data (grant number EP/Y028856/1).


\bibliography{main}

\begin{thebibliography}{89}
\providecommand{\natexlab}[1]{#1}
\providecommand{\url}[1]{\texttt{#1}}
\expandafter\ifx\csname urlstyle\endcsname\relax
  \providecommand{\doi}[1]{doi: #1}\else
  \providecommand{\doi}{doi: \begingroup \urlstyle{rm}\Url}\fi

\bibitem[Achiam et~al.(2023)Achiam, Adler, Agarwal, Ahmad, Akkaya, Aleman, Almeida, Altenschmidt, Altman, Anadkat, et~al.]{achiam2023gpt}
Josh Achiam, Steven Adler, Sandhini Agarwal, Lama Ahmad, Ilge Akkaya, Florencia~Leoni Aleman, Diogo Almeida, Janko Altenschmidt, Sam Altman, Shyamal Anadkat, et~al.
\newblock Gpt-4 technical report.
\newblock \emph{arXiv preprint arXiv:2303.08774}, 2023.

\bibitem[Ahuja et~al.(2021)Ahuja, Wang, Dhurandhar, Shanmugam, and Varshney]{ahuja2021empirical}
Kartik Ahuja, Jun Wang, Amit Dhurandhar, Karthikeyan Shanmugam, and Kush~R. Varshney.
\newblock Empirical or invariant risk minimization? a sample complexity perspective.
\newblock In \emph{ICLR}, 2021.

\bibitem[Alabdulmohsin et~al.(2023)Alabdulmohsin, Chiou, D’Amour, Gretton, Koyejo, Kusner, Pfohl, Salaudeen, Schrouff, and Tsai]{alabdulmohsin2023adapting}
Ibrahim Alabdulmohsin, Nicole Chiou, Alexander D’Amour, Arthur Gretton, Sanmi Koyejo, Matt~J Kusner, Stephen~R Pfohl, Olawale Salaudeen, Jessica Schrouff, and Katherine Tsai.
\newblock Adapting to latent subgroup shifts via concepts and proxies.
\newblock In \emph{AISTATS}, pages 9637--9661. PMLR, 2023.

\bibitem[Alayrac et~al.(2022)Alayrac, Donahue, Luc, Miech, Barr, Hasson, Lenc, Mensch, Millican, Reynolds, et~al.]{alayrac2022flamingo}
Jean-Baptiste Alayrac, Jeff Donahue, Pauline Luc, Antoine Miech, Iain Barr, Yana Hasson, Karel Lenc, Arthur Mensch, Katherine Millican, Malcolm Reynolds, et~al.
\newblock Flamingo: a visual language model for few-shot learning.
\newblock In \emph{NeurIPS}, volume~35, pages 23716--23736, 2022.

\bibitem[Andreassen et~al.(2021)Andreassen, Bahri, Neyshabur, and Roelofs]{andreassen2021evolution}
Anders Andreassen, Yasaman Bahri, Behnam Neyshabur, and Rebecca Roelofs.
\newblock The evolution of out-of-distribution robustness throughout fine-tuning.
\newblock \emph{arXiv preprint arXiv:2106.15831}, 2021.

\bibitem[Arjovsky et~al.(2019)Arjovsky, Bottou, Gulrajani, and Lopez-Paz]{arjovsky2019invariant}
Martin Arjovsky, L{\'e}on Bottou, Ishaan Gulrajani, and David Lopez-Paz.
\newblock Invariant risk minimization.
\newblock \emph{arXiv:1907.02893}, 2019.

\bibitem[Barbu et~al.(2019)Barbu, Mayo, Alverio, Luo, Wang, Gutfreund, Tenenbaum, and Katz]{barbu2019objectnet}
Andrei Barbu, David Mayo, Julian Alverio, William Luo, Christopher Wang, Dan Gutfreund, Josh Tenenbaum, and Boris Katz.
\newblock Objectnet: A large-scale bias-controlled dataset for pushing the limits of object recognition models.
\newblock \emph{NeurIPS}, 32, 2019.

\bibitem[Ben-Tal et~al.(2009)Ben-Tal, El~Ghaoui, and Nemirovski]{ben2009robust}
Aharon Ben-Tal, Laurent El~Ghaoui, and Arkadi Nemirovski.
\newblock \emph{Robust optimization}, volume~28.
\newblock Princeton university press, 2009.

\bibitem[Ben-Tal et~al.(2013)Ben-Tal, Den~Hertog, De~Waegenaere, Melenberg, and Rennen]{ben2013robust}
Aharon Ben-Tal, Dick Den~Hertog, Anja De~Waegenaere, Bertrand Melenberg, and Gijs Rennen.
\newblock Robust solutions of optimization problems affected by uncertain probabilities.
\newblock \emph{Management Science}, 59\penalty0 (2):\penalty0 341--357, 2013.

\bibitem[Betker et~al.(2023)Betker, Goh, Jing, Brooks, Wang, Li, Ouyang, Zhuang, Lee, Guo, et~al.]{betker2023improving}
James Betker, Gabriel Goh, Li~Jing, Tim Brooks, Jianfeng Wang, Linjie Li, Long Ouyang, Juntang Zhuang, Joyce Lee, Yufei Guo, et~al.
\newblock Improving image generation with better captions.
\newblock \emph{Computer Science. https://cdn. openai. com/papers/dall-e-3. pdf}, 2\penalty0 (3):\penalty0 8, 2023.

\bibitem[Blanchard et~al.(2011)Blanchard, Lee, and Scott]{blanchard2011generalizing}
Gilles Blanchard, Gyemin Lee, and Clayton Scott.
\newblock Generalizing from several related classification tasks to a new unlabeled sample.
\newblock In \emph{NeurIPS}, 2011.

\bibitem[Bommasani et~al.(2021)Bommasani, Hudson, Adeli, Altman, Arora, von Arx, Bernstein, Bohg, Bosselut, Brunskill, et~al.]{bommasani2021opportunities}
Rishi Bommasani, Drew~A Hudson, Ehsan Adeli, Russ Altman, Simran Arora, Sydney von Arx, Michael~S Bernstein, Jeannette Bohg, Antoine Bosselut, Emma Brunskill, et~al.
\newblock On the opportunities and risks of foundation models.
\newblock \emph{arXiv preprint arXiv:2108.07258}, 2021.

\bibitem[Cha et~al.(2021)Cha, Chun, Lee, Cho, Park, Lee, and Park]{cha2021swad}
Junbum Cha, Sanghyuk Chun, Kyungjae Lee, Han-Cheol Cho, Seunghyun Park, Yunsung Lee, and Sungrae Park.
\newblock Swad: Domain generalization by seeking flat minima.
\newblock In \emph{NeurIPS}, volume~34, pages 22405--22418, 2021.

\bibitem[Cha et~al.(2022)Cha, Lee, Park, and Chun]{cha2022domain}
Junbum Cha, Kyungjae Lee, Sungrae Park, and Sanghyuk Chun.
\newblock Domain generalization by mutual-information regularization with pre-trained models.
\newblock In \emph{ECCV}, pages 440--457. Springer, 2022.

\bibitem[Cheng et~al.(2024)Cheng, Xu, Jiang, Wang, Li, and Gao]{cheng2024disentangled}
De~Cheng, Zhipeng Xu, Xinyang Jiang, Nannan Wang, Dongsheng Li, and Xinbo Gao.
\newblock Disentangled prompt representation for domain generalization.
\newblock In \emph{CVPR}, pages 23595--23604, 2024.

\bibitem[Deng et~al.(2009)Deng, Dong, Socher, Li, Li, and Fei-Fei]{deng2009imagenet}
Jia Deng, Wei Dong, Richard Socher, Li-Jia Li, Kai Li, and Li~Fei-Fei.
\newblock Imagenet: A large-scale hierarchical image database.
\newblock In \emph{CVPR}, 2009.

\bibitem[Duchi et~al.(2021)Duchi, Glynn, and Namkoong]{duchi2021statistics}
John~C Duchi, Peter~W Glynn, and Hongseok Namkoong.
\newblock Statistics of robust optimization: A generalized empirical likelihood approach.
\newblock \emph{Mathematics of Operations Research}, 46\penalty0 (3):\penalty0 946--969, 2021.

\bibitem[Eastwood et~al.(2022)Eastwood, Robey, Singh, Von~K{\"u}gelgen, Hassani, Pappas, and Sch{\"o}lkopf]{eastwood2022probable}
Cian Eastwood, Alexander Robey, Shashank Singh, Julius Von~K{\"u}gelgen, Hamed Hassani, George~J Pappas, and Bernhard Sch{\"o}lkopf.
\newblock Probable domain generalization via quantile risk minimization.
\newblock In \emph{NeurIPS}, volume~35, pages 17340--17358, 2022.

\bibitem[Esfandiarpoor et~al.(2024)Esfandiarpoor, Menghini, and Bach]{esfandiarpoor2024if}
Reza Esfandiarpoor, Cristina Menghini, and Stephen~H Bach.
\newblock If clip could talk: Understanding vision-language model representations through their preferred concept descriptions.
\newblock \emph{arXiv preprint arXiv:2403.16442}, 2024.

\bibitem[Fang et~al.(2022)Fang, Ilharco, Wortsman, Wan, Shankar, Dave, and Schmidt]{fang2022data}
Alex Fang, Gabriel Ilharco, Mitchell Wortsman, Yuhao Wan, Vaishaal Shankar, Achal Dave, and Ludwig Schmidt.
\newblock Data determines distributional robustness in contrastive language image pre-training (clip).
\newblock In \emph{ICML}, pages 6216--6234. PMLR, 2022.

\bibitem[Furlanello et~al.(2018)Furlanello, Lipton, Tschannen, Itti, and Anandkumar]{furlanello2018born}
Tommaso Furlanello, Zachary Lipton, Michael Tschannen, Laurent Itti, and Anima Anandkumar.
\newblock Born again neural networks.
\newblock In \emph{ICML}, pages 1607--1616. PMLR, 2018.

\bibitem[Gao et~al.(2024{\natexlab{a}})Gao, Li, Xie, Zhi, Huang, Wang, Cao, and Zhang]{gao2024consistency}
Han Gao, Kaican Li, Weiyan Xie, Lin Zhi, Yongxiang Huang, Luning Wang, Caleb~Chen Cao, and Nevin~L. Zhang.
\newblock Consistency regularization for domain generalization with logit attribution matching.
\newblock In \emph{UAI}, 2024{\natexlab{a}}.

\bibitem[Gao et~al.(2023)Gao, Sagawa, Koh, Hashimoto, and Liang]{gao2023out}
Irena Gao, Shiori Sagawa, Pang~Wei Koh, Tatsunori Hashimoto, and Percy Liang.
\newblock Out-of-domain robustness via targeted augmentations.
\newblock In \emph{ICML}, 2023.

\bibitem[Gao et~al.(2024{\natexlab{b}})Gao, Geng, Zhang, Ma, Fang, Zhang, Li, and Qiao]{gao2024clip}
Peng Gao, Shijie Geng, Renrui Zhang, Teli Ma, Rongyao Fang, Yongfeng Zhang, Hongsheng Li, and Yu~Qiao.
\newblock Clip-adapter: Better vision-language models with feature adapters.
\newblock \emph{International Journal of Computer Vision}, 132\penalty0 (2):\penalty0 581--595, 2024{\natexlab{b}}.

\bibitem[Ge and Yu(2017)]{ge2017borrowing}
Weifeng Ge and Yizhou Yu.
\newblock Borrowing treasures from the wealthy: Deep transfer learning through selective joint fine-tuning.
\newblock In \emph{CVPR}, pages 1086--1095, 2017.

\bibitem[Geirhos et~al.(2020)Geirhos, Jacobsen, Michaelis, Zemel, Brendel, Bethge, and Wichmann]{geirhos2020shortcut}
Robert Geirhos, Jörn-Henrik Jacobsen, Claudio Michaelis, Richard Zemel, Wieland Brendel, Matthias Bethge, and Felix~A. Wichmann.
\newblock Shortcut learning in deep neural networks.
\newblock \emph{Nature Machine Intelligence}, 2020.

\bibitem[Goyal et~al.(2023)Goyal, Kumar, Garg, Kolter, and Raghunathan]{goyal2023finetune}
Sachin Goyal, Ananya Kumar, Sankalp Garg, Zico Kolter, and Aditi Raghunathan.
\newblock Finetune like you pretrain: Improved finetuning of zero-shot vision models.
\newblock In \emph{CVPR}, pages 19338--19347, 2023.

\bibitem[Gulrajani and Lopez-Paz(2021)]{gulrajani2021in}
Ishaan Gulrajani and David Lopez-Paz.
\newblock In search of lost domain generalization.
\newblock In \emph{ICLR}, 2021.

\bibitem[Guo et~al.(2019)Guo, Shi, Kumar, Grauman, Rosing, and Feris]{guo2019spottune}
Yunhui Guo, Honghui Shi, Abhishek Kumar, Kristen Grauman, Tajana Rosing, and Rogerio Feris.
\newblock Spottune: transfer learning through adaptive fine-tuning.
\newblock In \emph{CVPR}, pages 4805--4814, 2019.

\bibitem[Hendrycks et~al.(2021{\natexlab{a}})Hendrycks, Basart, Mu, Kadavath, Wang, Dorundo, Desai, Zhu, Parajuli, Guo, et~al.]{hendrycks2021many}
Dan Hendrycks, Steven Basart, Norman Mu, Saurav Kadavath, Frank Wang, Evan Dorundo, Rahul Desai, Tyler Zhu, Samyak Parajuli, Mike Guo, et~al.
\newblock The many faces of robustness: A critical analysis of out-of-distribution generalization.
\newblock In \emph{ICCV}, pages 8340--8349, 2021{\natexlab{a}}.

\bibitem[Hendrycks et~al.(2021{\natexlab{b}})Hendrycks, Zhao, Basart, Steinhardt, and Song]{hendrycks2021natural}
Dan Hendrycks, Kevin Zhao, Steven Basart, Jacob Steinhardt, and Dawn Song.
\newblock Natural adversarial examples.
\newblock In \emph{CVPR}, pages 15262--15271, 2021{\natexlab{b}}.

\bibitem[Ji et~al.(2021)Ji, Shin, Hwang, Park, and Moon]{ji2021refine}
Mingi Ji, Seungjae Shin, Seunghyun Hwang, Gibeom Park, and Il-Chul Moon.
\newblock Refine myself by teaching myself: Feature refinement via self-knowledge distillation.
\newblock In \emph{CVPR}, pages 10664--10673, 2021.

\bibitem[Jia et~al.(2021)Jia, Yang, Xia, Chen, Parekh, Pham, Le, Sung, Li, and Duerig]{jia2021scaling}
Chao Jia, Yinfei Yang, Ye~Xia, Yi-Ting Chen, Zarana Parekh, Hieu Pham, Quoc Le, Yun-Hsuan Sung, Zhen Li, and Tom Duerig.
\newblock Scaling up visual and vision-language representation learning with noisy text supervision.
\newblock In \emph{ICML}, pages 4904--4916. PMLR, 2021.

\bibitem[Jiang et~al.(2019)Jiang, He, Chen, Liu, Gao, and Zhao]{jiang2019smart}
Haoming Jiang, Pengcheng He, Weizhu Chen, Xiaodong Liu, Jianfeng Gao, and Tuo Zhao.
\newblock Smart: Robust and efficient fine-tuning for pre-trained natural language models through principled regularized optimization.
\newblock \emph{arXiv preprint arXiv:1911.03437}, 2019.

\bibitem[Kirichenko et~al.(2023)Kirichenko, Izmailov, and Wilson]{kirichenko2023last}
Polina Kirichenko, Pavel Izmailov, and Andrew~Gordon Wilson.
\newblock Last layer re-training is sufficient for robustness to spurious correlations.
\newblock In \emph{ICLR}, 2023.

\bibitem[Kirkpatrick et~al.(2017)Kirkpatrick, Pascanu, Rabinowitz, Veness, Desjardins, Rusu, Milan, Quan, Ramalho, Grabska-Barwinska, et~al.]{kirkpatrick2017overcoming}
James Kirkpatrick, Razvan Pascanu, Neil Rabinowitz, Joel Veness, Guillaume Desjardins, Andrei~A Rusu, Kieran Milan, John Quan, Tiago Ramalho, Agnieszka Grabska-Barwinska, et~al.
\newblock Overcoming catastrophic forgetting in neural networks.
\newblock \emph{Proceedings of the national academy of sciences}, 114\penalty0 (13):\penalty0 3521--3526, 2017.

\bibitem[Koh et~al.(2021)Koh, Sagawa, Marklund, Xie, Zhang, Balsubramani, Hu, Yasunaga, Phillips, Gao, et~al.]{koh2021wilds}
Pang~Wei Koh, Shiori Sagawa, Henrik Marklund, Sang~Michael Xie, Marvin Zhang, Akshay Balsubramani, Weihua Hu, Michihiro Yasunaga, Richard~Lanas Phillips, Irena Gao, et~al.
\newblock Wilds: A benchmark of in-the-wild distribution shifts.
\newblock In \emph{ICML}, pages 5637--5664. PMLR, 2021.

\bibitem[Kumar et~al.(2022)Kumar, Raghunathan, Jones, Ma, and Liang]{kumar2022fine}
Ananya Kumar, Aditi Raghunathan, Robbie~Matthew Jones, Tengyu Ma, and Percy Liang.
\newblock Fine-tuning can distort pretrained features and underperform out-of-distribution.
\newblock In \emph{ICLR}, 2022.

\bibitem[Li et~al.(2018)Li, Grandvalet, and Davoine]{li2018explicit}
Xuhong Li, Yves Grandvalet, and Franck Davoine.
\newblock Explicit inductive bias for transfer learning with convolutional networks.
\newblock In \emph{ICML}, pages 2825--2834. PMLR, 2018.

\bibitem[Liu et~al.(2021)Liu, Sun, Wang, Tang, Li, Qin, Chen, and Liu]{liu2021learning}
Chang Liu, Xinwei Sun, Jindong Wang, Haoyue Tang, Tao Li, Tao Qin, Wei Chen, and Tie-Yan Liu.
\newblock Learning causal semantic representation for out-of-distribution prediction.
\newblock In \emph{NeurIPS}, 2021.

\bibitem[Liu et~al.(2023)Liu, Li, Wu, and Lee]{liu2023visual}
Haotian Liu, Chunyuan Li, Qingyang Wu, and Yong~Jae Lee.
\newblock Visual instruction tuning.
\newblock In \emph{NeurIPS}, volume~36, 2023.

\bibitem[Lv et~al.(2022)Lv, Liang, Li, Zang, Liu, Wang, and Liu]{lv2022causality}
Fangrui Lv, Jian Liang, Shuang Li, Bin Zang, Chi~Harold Liu, Ziteng Wang, and Di~Liu.
\newblock Causality inspired representation learning for domain generalization.
\newblock In \emph{CVPR}, pages 8046--8056, 2022.

\bibitem[Mahajan et~al.(2021)Mahajan, Tople, and Sharma]{mahajan2021domain}
Divyat Mahajan, Shruti Tople, and Amit Sharma.
\newblock Domain generalization using causal matching.
\newblock In \emph{ICML}, pages 7313--7324. PMLR, 2021.

\bibitem[Maniparambil et~al.(2023)Maniparambil, Vorster, Molloy, Murphy, McGuinness, and O'Connor]{maniparambil2023enhancing}
Mayug Maniparambil, Chris Vorster, Derek Molloy, Noel Murphy, Kevin McGuinness, and Noel~E O'Connor.
\newblock Enhancing clip with gpt-4: Harnessing visual descriptions as prompts.
\newblock In \emph{ICCV}, pages 262--271, 2023.

\bibitem[Mao et~al.(2024)Mao, Chen, Jia, Zhang, Xue, and Li]{mao2024context}
Xiaofeng Mao, Yufeng Chen, Xiaojun Jia, Rong Zhang, Hui Xue, and Zhao Li.
\newblock Context-aware robust fine-tuning.
\newblock \emph{IJCV}, 132\penalty0 (5):\penalty0 1685--1700, 2024.

\bibitem[McCloskey and Cohen(1989)]{mccloskey1989catastrophic}
Michael McCloskey and Neal~J Cohen.
\newblock Catastrophic interference in connectionist networks: The sequential learning problem.
\newblock In \emph{Psychology of learning and motivation}, volume~24, pages 109--165. Elsevier, 1989.

\bibitem[Menon and Vondrick(2022)]{menon2022visual}
Sachit Menon and Carl Vondrick.
\newblock Visual classification via description from large language models.
\newblock In \emph{ICLR}, 2022.

\bibitem[Miller et~al.(2021)Miller, Taori, Raghunathan, Sagawa, Koh, Shankar, Liang, Carmon, and Schmidt]{miller2021accuracy}
John~P Miller, Rohan Taori, Aditi Raghunathan, Shiori Sagawa, Pang~Wei Koh, Vaishaal Shankar, Percy Liang, Yair Carmon, and Ludwig Schmidt.
\newblock Accuracy on the line: on the strong correlation between out-of-distribution and in-distribution generalization.
\newblock In \emph{International conference on machine learning}, pages 7721--7735. PMLR, 2021.

\bibitem[Mitrovic et~al.(2021)Mitrovic, McWilliams, Walker, Buesing, and Blundell]{mitrovic2021representation}
Jovana Mitrovic, Brian McWilliams, Jacob~C Walker, Lars~Holger Buesing, and Charles Blundell.
\newblock Representation learning via invariant causal mechanisms.
\newblock In \emph{ICLR}, 2021.

\bibitem[Moayeri et~al.(2022)Moayeri, Singla, and Feizi]{moayeri2022hard}
Mazda Moayeri, Sahil Singla, and Soheil Feizi.
\newblock Hard imagenet: Segmentations for objects with strong spurious cues.
\newblock \emph{NeurIPS}, 35:\penalty0 10068--10077, 2022.

\bibitem[Muandet et~al.(2013)Muandet, Balduzzi, and Sch{\"o}lkopf]{muandet2013domain}
Krikamol Muandet, David Balduzzi, and Bernhard Sch{\"o}lkopf.
\newblock Domain generalization via invariant feature representation.
\newblock In \emph{ICML}, 2013.

\bibitem[Nam et~al.(2024)Nam, Heo, and Lee]{namlipsum}
Giung Nam, Byeongho Heo, and Juho Lee.
\newblock Lipsum-ft: Robust fine-tuning of zero-shot models using random text guidance.
\newblock In \emph{ICLR}, 2024.

\bibitem[Oh et~al.(2023)Oh, Kim, Lim, Park, Jeong, Cheng, and Song]{oh2023towards}
Changdae Oh, Mijoo Kim, Hyesu Lim, Junhyeok Park, Euiseog Jeong, Zhi-Qi Cheng, and Kyungwoo Song.
\newblock Towards calibrated robust fine-tuning of vision-language models.
\newblock \emph{arXiv preprint arXiv:2311.01723}, 2023.

\bibitem[Peters et~al.(2016)Peters, B{\"u}hlmann, and Meinshausen]{peters2016causal}
Jonas Peters, Peter B{\"u}hlmann, and Nicolai Meinshausen.
\newblock Causal inference by using invariant prediction: identification and confidence intervals.
\newblock \emph{Journal of the Royal Statistical Society: Series B (Statistical Methodology)}, 78\penalty0 (5):\penalty0 947--1012, 2016.

\bibitem[Pham et~al.(2023)Pham, Dai, Ghiasi, Kawaguchi, Liu, Yu, Yu, Chen, Luong, Wu, et~al.]{pham2023combined}
Hieu Pham, Zihang Dai, Golnaz Ghiasi, Kenji Kawaguchi, Hanxiao Liu, Adams~Wei Yu, Jiahui Yu, Yi-Ting Chen, Minh-Thang Luong, Yonghui Wu, et~al.
\newblock Combined scaling for zero-shot transfer learning.
\newblock \emph{Neurocomputing}, 555:\penalty0 126658, 2023.

\bibitem[Pi et~al.(2024)Pi, Yao, Han, Liang, Zhang, and Xu]{pi2024ins}
Renjie Pi, Lewei Yao, Jianhua Han, Xiaodan Liang, Wei Zhang, and Hang Xu.
\newblock Ins-detclip: Aligning detection model to follow human-language instruction.
\newblock In \emph{ICLR}, 2024.

\bibitem[Prabhu et~al.(2022)Prabhu, Selvaraju, Hoffman, and Naik]{prabhu2022can}
Viraj Prabhu, Ramprasaath~R Selvaraju, Judy Hoffman, and Nikhil Naik.
\newblock Can domain adaptation make object recognition work for everyone?
\newblock In \emph{CVPR}, pages 3981--3988, 2022.

\bibitem[Pratt et~al.(2023)Pratt, Covert, Liu, and Farhadi]{pratt2023does}
Sarah Pratt, Ian Covert, Rosanne Liu, and Ali Farhadi.
\newblock What does a platypus look like? generating customized prompts for zero-shot image classification.
\newblock In \emph{ICCV}, pages 15691--15701, 2023.

\bibitem[Radford et~al.(2019)Radford, Wu, Child, Luan, Amodei, Sutskever, et~al.]{radford2019language}
Alec Radford, Jeffrey Wu, Rewon Child, David Luan, Dario Amodei, Ilya Sutskever, et~al.
\newblock Language models are unsupervised multitask learners.
\newblock \emph{OpenAI blog}, 1\penalty0 (8):\penalty0 9, 2019.

\bibitem[Radford et~al.(2021)Radford, Kim, Hallacy, Ramesh, Goh, Agarwal, Sastry, Askell, Mishkin, Clark, et~al.]{radford2021learning}
Alec Radford, Jong~Wook Kim, Chris Hallacy, Aditya Ramesh, Gabriel Goh, Sandhini Agarwal, Girish Sastry, Amanda Askell, Pamela Mishkin, Jack Clark, et~al.
\newblock Learning transferable visual models from natural language supervision.
\newblock In \emph{ICML}, pages 8748--8763. PMLR, 2021.

\bibitem[Recht et~al.(2019)Recht, Roelofs, Schmidt, and Shankar]{recht2019imagenet}
Benjamin Recht, Rebecca Roelofs, Ludwig Schmidt, and Vaishaal Shankar.
\newblock Do imagenet classifiers generalize to imagenet?
\newblock In \emph{ICML}, pages 5389--5400. PMLR, 2019.

\bibitem[Sagawa et~al.(2020)Sagawa, Koh, Hashimoto, and Liang]{sagawa2020distributionally}
Shiori Sagawa, Pang~Wei Koh, Tatsunori~B. Hashimoto, and Percy Liang.
\newblock Distributionally robust neural networks.
\newblock In \emph{ICLR}, 2020.

\bibitem[Shah et~al.(2020)Shah, Tamuly, Raghunathan, Jain, and Netrapalli]{shah2020pitfalls}
Harshay Shah, Kaustav Tamuly, Aditi Raghunathan, Prateek Jain, and Praneeth Netrapalli.
\newblock The pitfalls of simplicity bias in neural networks.
\newblock In \emph{NeurIPS}, volume~33, pages 9573--9585, 2020.

\bibitem[Shen et~al.(2022)Shen, Li, Tan, Bansal, Rohrbach, Chang, Yao, and Keutzer]{shen2022much}
Sheng Shen, Liunian~Harold Li, Hao Tan, Mohit Bansal, Anna Rohrbach, Kai-Wei Chang, Zhewei Yao, and Kurt Keutzer.
\newblock How much can clip benefit vision-and-language tasks?
\newblock In \emph{ICLR}, 2022.

\bibitem[Shu et~al.(2023)Shu, Guo, Wu, Wang, Wang, and Long]{shu2023clipood}
Yang Shu, Xingzhuo Guo, Jialong Wu, Ximei Wang, Jianmin Wang, and Mingsheng Long.
\newblock Clipood: Generalizing clip to out-of-distributions.
\newblock In \emph{ICML}, pages 31716--31731. PMLR, 2023.

\bibitem[Szegedy et~al.(2013)Szegedy, Zaremba, Sutskever, Bruna, Erhan, Goodfellow, and Fergus]{szegedy2013intriguing}
Christian Szegedy, Wojciech Zaremba, Ilya Sutskever, Joan Bruna, Dumitru Erhan, Ian Goodfellow, and Rob Fergus.
\newblock Intriguing properties of neural networks.
\newblock \emph{arXiv preprint arXiv:1312.6199}, 2013.

\bibitem[Taori et~al.(2020)Taori, Dave, Shankar, Carlini, Recht, and Schmidt]{taori2020measuring}
Rohan Taori, Achal Dave, Vaishaal Shankar, Nicholas Carlini, Benjamin Recht, and Ludwig Schmidt.
\newblock Measuring robustness to natural distribution shifts in image classification.
\newblock \emph{NeurIPS}, 33:\penalty0 18583--18599, 2020.

\bibitem[Tenenbaum and Freeman(1996)]{tenenbaum1996separating}
Joshua Tenenbaum and William Freeman.
\newblock Separating style and content.
\newblock In \emph{NeurIPS}, volume~9, 1996.

\bibitem[Teney et~al.(2023)Teney, Lin, Oh, and Abbasnejad]{teney2023id}
Damien Teney, Yong Lin, Seong~Joon Oh, and Ehsan Abbasnejad.
\newblock Id and ood performance are sometimes inversely correlated on real-world datasets.
\newblock In \emph{NeurIPS}, volume~36, 2023.

\bibitem[Tian et~al.(2023{\natexlab{a}})Tian, He, Dai, Ma, Liu, and Kira]{tian2023trainable}
Junjiao Tian, Zecheng He, Xiaoliang Dai, Chih-Yao Ma, Yen-Cheng Liu, and Zsolt Kira.
\newblock Trainable projected gradient method for robust fine-tuning.
\newblock In \emph{CVPR}, pages 7836--7845, 2023{\natexlab{a}}.

\bibitem[Tian et~al.(2023{\natexlab{b}})Tian, Liu, Smith, and Kira]{tian2024fast}
Junjiao Tian, Yen-Cheng Liu, James~S Smith, and Zsolt Kira.
\newblock Fast trainable projection for robust fine-tuning.
\newblock \emph{NeurIPS}, 36, 2023{\natexlab{b}}.

\bibitem[Tsipras et~al.(2019)Tsipras, Santurkar, Engstrom, Turner, and Madry]{tsipras2019robustness}
Dimitris Tsipras, Shibani Santurkar, Logan Engstrom, Alexander Turner, and Aleksander Madry.
\newblock Robustness may be at odds with accuracy.
\newblock In \emph{ICLR}, 2019.

\bibitem[Vapnik(1998)]{vapnik1998statistical}
Vladimir Vapnik.
\newblock \emph{Statistical Learning Theory}.
\newblock Wiley, 1998.

\bibitem[Wald(1945)]{wald1945statistical}
Abraham Wald.
\newblock Statistical decision functions which minimize the maximum risk.
\newblock \emph{Annals of Mathematics}, pages 265--280, 1945.

\bibitem[Wang et~al.(2019)Wang, Ge, Lipton, and Xing]{wang2019learning2}
Haohan Wang, Songwei Ge, Zachary Lipton, and Eric~P Xing.
\newblock Learning robust global representations by penalizing local predictive power.
\newblock In \emph{NeurIPS}, 2019.

\bibitem[Wortsman et~al.(2022)Wortsman, Ilharco, Kim, Li, Kornblith, Roelofs, Lopes, Hajishirzi, Farhadi, Namkoong, et~al.]{wortsman2022robust}
Mitchell Wortsman, Gabriel Ilharco, Jong~Wook Kim, Mike Li, Simon Kornblith, Rebecca Roelofs, Raphael~Gontijo Lopes, Hannaneh Hajishirzi, Ali Farhadi, Hongseok Namkoong, et~al.
\newblock Robust fine-tuning of zero-shot models.
\newblock In \emph{CVPR}, pages 7959--7971, 2022.

\bibitem[Xu et~al.(2024)Xu, Xie, Tan, Huang, Howes, Sharma, Li, Ghosh, Zettlemoyer, and Feichtenhofer]{xu2024demystifying}
Hu~Xu, Saining Xie, Xiaoqing Tan, Po-Yao Huang, Russell Howes, Vasu Sharma, Shang-Wen Li, Gargi Ghosh, Luke Zettlemoyer, and Christoph Feichtenhofer.
\newblock Demystifying clip data.
\newblock In \emph{ICLR}, 2024.

\bibitem[Yan et~al.(2023)Yan, Wang, Zhong, Dong, He, Lu, Wang, Shang, and McAuley]{yan2023learning}
An~Yan, Yu~Wang, Yiwu Zhong, Chengyu Dong, Zexue He, Yujie Lu, William~Yang Wang, Jingbo Shang, and Julian McAuley.
\newblock Learning concise and descriptive attributes for visual recognition.
\newblock In \emph{ICCV}, pages 3090--3100, 2023.

\bibitem[Yang et~al.(2023)Yang, Panagopoulou, Zhou, Jin, Callison-Burch, and Yatskar]{yang2023language}
Yue Yang, Artemis Panagopoulou, Shenghao Zhou, Daniel Jin, Chris Callison-Burch, and Mark Yatskar.
\newblock Language in a bottle: Language model guided concept bottlenecks for interpretable image classification.
\newblock In \emph{CVPR}, pages 19187--19197, 2023.

\bibitem[Ye et~al.(2022)Ye, Li, Bai, Yu, Hong, Zhou, Li, and Zhu]{ye2022ood}
Nanyang Ye, Kaican Li, Haoyue Bai, Runpeng Yu, Lanqing Hong, Fengwei Zhou, Zhenguo Li, and Jun Zhu.
\newblock Ood-bench: Quantifying and understanding two dimensions of out-of-distribution generalization.
\newblock In \emph{CVPR}, pages 7947--7958, 2022.

\bibitem[Zenke et~al.(2017)Zenke, Poole, and Ganguli]{zenke2017continual}
Friedemann Zenke, Ben Poole, and Surya Ganguli.
\newblock Continual learning through synaptic intelligence.
\newblock In \emph{ICML}, pages 3987--3995. PMLR, 2017.

\bibitem[Zhang et~al.(2020)Zhang, Sax, Zamir, Guibas, and Malik]{zhang2020side}
Jeffrey~O Zhang, Alexander Sax, Amir Zamir, Leonidas Guibas, and Jitendra Malik.
\newblock Side-tuning: a baseline for network adaptation via additive side networks.
\newblock In \emph{ECCV}, pages 698--714. Springer, 2020.

\bibitem[Zhang et~al.(2019)Zhang, Song, Gao, Chen, Bao, and Ma]{zhang2019your}
Linfeng Zhang, Jiebo Song, Anni Gao, Jingwei Chen, Chenglong Bao, and Kaisheng Ma.
\newblock Be your own teacher: Improve the performance of convolutional neural networks via self distillation.
\newblock In \emph{ICCV}, pages 3713--3722, 2019.

\bibitem[Zhang et~al.(2021)Zhang, Bao, and Ma]{zhang2021self}
Linfeng Zhang, Chenglong Bao, and Kaisheng Ma.
\newblock Self-distillation: Towards efficient and compact neural networks.
\newblock \emph{TPAMI}, 44\penalty0 (8):\penalty0 4388--4403, 2021.

\bibitem[Zhang et~al.(2023{\natexlab{a}})Zhang, Li, Gao, Xie, Lin, Li, Wang, and Huang]{zhang2023causal}
Nevin~L. Zhang, Kaican Li, Han Gao, Weiyan Xie, Zhi Lin, Zhenguo Li, Luning Wang, and Yongxiang Huang.
\newblock A causal framework to unify common domain generalization approaches.
\newblock \emph{arXiv preprint arXiv:2307.06825}, 2023{\natexlab{a}}.

\bibitem[Zhang et~al.(2022)Zhang, Guo, Zhang, Li, Miao, Cui, Qiao, Gao, and Li]{zhang2022pointclip}
Renrui Zhang, Ziyu Guo, Wei Zhang, Kunchang Li, Xupeng Miao, Bin Cui, Yu~Qiao, Peng Gao, and Hongsheng Li.
\newblock Pointclip: Point cloud understanding by clip.
\newblock In \emph{CVPR}, pages 8552--8562, 2022.

\bibitem[Zhang et~al.(2023{\natexlab{b}})Zhang, Shen, Wang, Gu, and Zhang]{zhang2023connecting}
Yang Zhang, Yue Shen, Dong Wang, Jinjie Gu, and Guannan Zhang.
\newblock Connecting unseen domains: Cross-domain invariant learning in recommendation.
\newblock In \emph{Proceedings of the 46th International ACM SIGIR Conference on Research and Development in Information Retrieval}, pages 1894--1898, 2023{\natexlab{b}}.

\bibitem[Zhu et~al.(2020)Zhu, Cheng, Gan, Sun, Goldstein, and Liu]{zhu2020freelb}
Chen Zhu, Yu~Cheng, Zhe Gan, Siqi Sun, Tom Goldstein, and Jingjing Liu.
\newblock Freelb: Enhanced adversarial training for natural language understanding.
\newblock In \emph{ICLR}, 2020.

\bibitem[Zhu et~al.(2024)Zhu, Chen, Shen, Li, and Elhoseiny]{zhu2024minigpt}
Deyao Zhu, Jun Chen, Xiaoqian Shen, Xiang Li, and Mohamed Elhoseiny.
\newblock Minigpt-4: Enhancing vision-language understanding with advanced large language models.
\newblock In \emph{ICLR}, 2024.

\end{thebibliography}
\bibliographystyle{plainnat}

\newpage

\appendix


\section{Proofs}
\label{appendix:proof}

\begin{lemma}
    \label{lemma:convex-frontier}
    Let $p$ and $q$ be two probability distributions over $\mathcal{X} \times \mathcal{Y}$.
    The cross-entropy between $p(y|x)$ and $q(y|x)$ over $p(x)$, i.e., $H_p(q) = \mathbb{E}_{(x, y) \sim p} [-\log q(y|x)]$, is convex \wrt $q$.
\end{lemma}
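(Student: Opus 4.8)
The plan is to reduce the statement to the elementary convexity of $u \mapsto -\log u$ on $(0,\infty)$ and then observe that convexity survives both the affine substitution $q \mapsto q(y|x)$ and the subsequent averaging over $p$.

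First I would fix two conditional distributions $q_0(y|x)$ and $q_1(y|x)$ together with a scalar $t \in [0,1]$, and set $q_t := (1-t) q_0 + t q_1$. Since for each $x$ the set of distributions over $\mathcal{Y}$ is convex, $q_t(y|x)$ is again a valid conditional distribution, so $H_p(q_t)$ is well defined and it makes sense to test convexity along this segment. For every pair $(x,y)$ in the support of $p$, convexity of $-\log$ yields the pointwise bound
\[
-\log q_t(y|x) \;=\; -\log\!\big[(1-t)\, q_0(y|x) + t\, q_1(y|x)\big] \;\le\; (1-t)\,\big(-\log q_0(y|x)\big) + t\,\big(-\log q_1(y|x)\big),
\]
where we adopt the convention $-\log 0 = +\infty$ so that the inequality remains valid in the extended reals even when one of the $q_i(y|x)$ vanishes.

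Next I would integrate this pointwise inequality \wrt $p(x,y)$. Because $\mathbb{E}_{(x,y)\sim p}[\,\cdot\,]$ is a positive linear functional and hence order-preserving, taking expectations on both sides gives $H_p(q_t) \le (1-t)\, H_p(q_0) + t\, H_p(q_1)$, which is precisely the definition of convexity of $q \mapsto H_p(q)$.

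There is no genuine obstacle here; the only points deserving a sentence of care are (i) recording that the domain over which convexity is asserted, namely the set of conditional distributions, is itself convex, so that $q_t$ is admissible, and (ii) handling the boundary case $q_i(y|x)=0$ through the extended-real-valued convention, which disturbs neither the inequality nor the conclusion.
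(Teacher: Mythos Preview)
Your proposal is correct and follows essentially the same approach as the paper: both verify the Jensen-type inequality by applying the convexity of $-\log$ pointwise to $-\log[(1-t)q_0(y|x)+t q_1(y|x)]$ and then taking the expectation over $p$. Your version is slightly more careful in recording that the domain of conditional distributions is convex and in handling the $q_i(y|x)=0$ boundary via the extended-real convention, but the argument is otherwise identical to the paper's.
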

\begin{proof}
    It suffices to show that for any pair of ($q_1$, $q_2$) and $\alpha \in [0, 1]$ we have $H_p(\alpha q_1 + (1 - \alpha) q_2) \leq \alpha H_p(q_1) + (1 - \alpha) H_p(q_2)$.
    \begin{align}
        H_p(\alpha q_1 + (1 - \alpha) q_2)
        &= \mathbb{E}_{(x, y) \sim p} [-\log (\alpha q_1(y|x) + (1 - \alpha) q_2(y|x))] \nonumber \\
        &\leq \mathbb{E}_{(x, y) \sim p} [-\alpha \log q_1(y|x) - (1 - \alpha) \log q_2(y|x)] \\
        &= \alpha H_p(q_1) + (1 - \alpha) H_p(q_2).\qedhere
    \end{align}
\end{proof}

\vspace{2mm}

\duality*
\begin{proof}
    Recall that IDRM aims to solve for
    \begin{equation}
        \label{eq:IDRM-copy}
        \min_{\theta \in \Theta} R_\mathrm{s}(\theta)  \quad\mathrm{subject\ to}\quad \max_{d \in \mathcal{D}} R_d(\theta) \leq \alpha. \tag{IDRM}
    \end{equation}
    Here, $R_d(\theta) = H_{p_d}(\hptheta) = \mathbb{E}_{(x, y) \sim p_d} [-\log \hptheta]$ is the cross-entropy between $p_d(y|x)$ and $\hptheta$ over $p_d(x)$.
    It follows from Lemma \ref{lemma:convex-frontier} that $R_d(\theta)$ is convex \wrt $\hptheta$ for all $d \in \mathcal{D}$.
    
    Since the point-wise maximum of multiple convex functions is also convex, $\max_{d \in \mathcal{D}} R_d(\theta)$ is convex and therefore IDRM is a convex optimization problem \wrt $\hptheta$.
    By Slater's condition, strong duality holds between IDRM and the Lagrangian dual of IDRM:
    \begin{equation}
        \max_{\lambda' \geq 0} \min_{\theta \in \Theta} R_\mathrm{s}(\theta) + \lambda' \left[\max_{d \in \mathcal{D}} R_d(\theta) - \alpha\right],
    \end{equation}
    for any $\alpha > \min_{\theta \in \Theta}\max_{d \in \mathcal{D}} R_d(\theta)$, i.e., when a strictly feasible solution to \eqref{eq:IDRM-copy} exists.
\end{proof}

\section{Introduction to baseline methods}
\label{sec:recent_related-work}

The key baseline we compare our DRM method with is ``fine-tune like you pre-train'' (FLYP) \citep{goyal2023finetune}. While traditional fine-tuning of CLIP models adds a randomly initialized classification head on top of the image encoder, \citet{goyal2023finetune} demonstrated that it is more effective to simply reuse the text encoder. We therefore follow FLYP to update both the image encoder and text encoder in fine-tuning. They now differ only in the loss functions used in fine-tuning. FLYP uses the CLIP contrastive loss \citep{radford2021learning} for the ERM and showed that this is better than using the standard cross-entropy loss. We adopt this approach for the ERM component (the first term) of DRM to facilitate comparison. Thus, the only difference between DRM and FLYP is the additional regularization term in the DRM loss function.

Before the introduction of FLYP, conventional fine-tuning of CLIP models involved adding a linear classification head to the image encoder. The linear probing method (LP) only fine-tunes this new classification head, keeping the image encoder fixed, whereas the full fine-tuning method (FT) trains both the head and the encoder. The {LP-FT} approach \citep{kumar2022fine} begins with LP and then transitions to full fine-tuning.

Besides, {L2-SP} \citep{li2018explicit} and WiSE-FT \citep{wortsman2022robust} are two established fine-tuning variants that restrict the divergence from the pre-trained model.  L2-SP specifically integrates an $L^2$ regularization term into the loss function to constrain the parameter shifts of the fine-tuned model relative to the pre-trained model. WiSE-FT \citep{wortsman2022robust}  interpolates parameters of a pre-trained zero-shot model $\theta_{\text{zs}}$ and that of a fine-tuned model $\theta_{\text{ft}}$ using $\theta_{\text{wise-ft}}= \rho \cdot \theta_{\text{zs}} + (1-\rho) \cdot \theta_{\text{ft}}$.

Alongside L2-SP and WiSE-FT, several works concurrent with our own have also introduced robust fine-tuning methods via reducing the difference between pre-trained and fine-tuned models. For example, CAR-FT \citep{mao2024context} and the method proposed by \citet{cheng2024disentangled} seek to minimize the distance between the context distributions generated by pre-trained and fine-tuned CLIP models. However, a significant limitation of them is that they require prior knowledge of image contexts, such as background and viewpoint, which restricts their practical applicability.

Alternatively, Lipsum-ft \citep{namlipsum} implements a regularization strategy without requiring prior context information, focusing on minimizing the $L^2$ distance between the affinities of images and random texts from both pre-trained and fine-tuned models. CLIPood \citep{shu2023clipood} utilizes a beta moving average for updating parameters during training, and CaRot \citep{oh2023towards} focuses on regularizing singular value distributions while incorporating an exponential moving average for parameter updates. TPGM \citep{tian2023trainable} and FTP \citep{tian2024fast} take a more fine-grained approach by autonomously determining the most effective regularization for each layer's parameters and enabling more efficient learning of layer-specific projection regularization, respectively.

While these methods aim to prevent the fine-tuned model from deviating too far from the pre-trained one, they do not specify which pre-trained features should be preserved. In contrast, our proposed DRM focuses on preserving the dataset-related core visual features, making it a more targeted approach compared to the aforementioned methods.


\section{Details on concept description creation}
\label{appendix:cd_creation}

We utilized GPT-4 \citep{achiam2023gpt}, a leading-edge language model developed by OpenAI, to create concept descriptions. We supplied GPT-4 with the class name and tailored prompts designed to guide the model to focus on the essential visual features of the class while omitting unrelated contextual information.

Specifically, for the iWildCam dataset, our prompt to GPT-4 was:

\begin{quote}

{Q: Generate a short sentence that describes the visual features of a given animal. Do not include its function, its surroundings, or the environment it usually inhabits. The sentence should be concise. For example, [goldfish: a long, golden body with back fins].}
\end{quote}

After that, we asked it to generate concept description for each wildlife class. For example:

\begin{quote}
{Q: Now the given animal is {\em Cougar}.}

{A: [{\em Cougar}: \textit{a large, tawny cat with a muscular build and a small head.}]}
\end{quote}

Another example is the FMoW dataset. We found it is challenging to encapsulate the core visual features of land used for specific purposes in just one short sentence. Consequently, we employed GPT-4 to generate six sentences for each type of land use. When we implemented DRM on this dataset, we slightly modified the image classifier to use the average of the six text embeddings for each class as the classification head.

For this dataset, our initial prompt to GPT-4 was:

\begin{quote}
{Q: The Functional Map of the World (fMoW) dataset is designed to identify the functions of buildings and land use by examining satellite images over time.}

{Your task is to generate six sentences, each describing the visual features likely to be seen in these satellite images, given the functional purpose of buildings and land use. The features you describe should be visible from an aerial perspective and consistent across various countries. Each sentence should clearly and comprehensively describe a visual feature, focusing solely on its appearance without including details of the surrounding area or the purpose of the features, such as ``Helipad for urgent aerial access''.}
\end{quote}

After that, we asked it to generate concept description for each land use. For example:

\begin{quote}
{Q: Now the given land use is {\em Road Bridge}.}

A: [{\em Road Bridge}: 

\textit{Visibly characterized by a straight or slightly curved linear structure that spans across natural or man-made obstacles.}

\textit{Constructed from materials that appear robust in satellite images, often showing as grey or dark tones against the landscape.}

\textit{The bridge deck is typically marked with parallel lines, indicating the lanes for vehicular traffic visible from above.}

\textit{Elevation above the surrounding area is noticeable, with shadows often highlighting the height and structure of the bridge.}

\textit{Support features like piers or columns are regularly spaced, visible as distinct vertical elements that support the span.}

\textit{End points of the bridge integrate with road networks, appearing as seamless transitions from elevated to ground-level roads.}]

\end{quote}

The full list of concept descriptions generated by GPT-4 for each class of different datasets considered in this work will be publicly available online later.

By utilizing GPT-4, we were able to automatically produce precise concept descriptions of various classes. The use of LLMs to generate class description from the class name is not new, which have been explored in \citet{menon2022visual,pratt2023does,maniparambil2023enhancing} and \citet{yang2023language,yan2023learning}. Among them, the first three works mainly explored the use of class descriptions for zero-shot classification, while this work focuses on fine-tuning. The class descriptions generated in these studies often include contextual information due to prompts like ``\textit{Describe an image from the internet of a(n) {}...}'', which may not be suitable for our application of estimating worst-case risk. The descriptions in \citet{yang2023language, yan2023learning} are more aligned with our approach, while they focus on using LLM-generated concept descriptions to develop concept bottleneck models for interpretable image classification.

\begin{figure}[t]

    \centering
    \begin{tabular}{p{2cm}cccc} 
                     & \includegraphics[width=2.cm,height=2.cm]{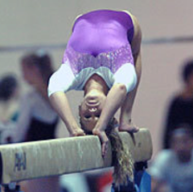}  
                     &\multicolumn{1}{c|}{ \includegraphics[width=2.cm,height=2.cm]{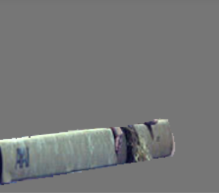}}  
                     & \includegraphics[width=2.cm,height=2.cm]{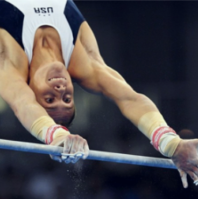}   
                     & \includegraphics[width=2.cm,height=2.cm]{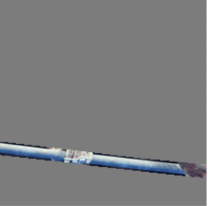} \\ 
        {\tt df} affinity:     & 0.367 & \multicolumn{1}{c|}{0.262} & 0.333& 0.267 \\
        {\tt cd} affinity:     & 0.286 & \multicolumn{1}{c|}{0.281} & 0.258 & 0.273 \\
  {\tt df} prompt: & \multicolumn{2}{c|}{\small \textit{An image of balance beam.}} &  \multicolumn{2}{c}{\small \textit{An image of gymnastic horizontal bar.}}  \\
  {\tt cd} prompt: & \multicolumn{2}{c|}{\scriptsize \textit{A long, thin piece of wood or metal}} &  \multicolumn{2}{l}{\scriptsize \textit{Long metal or wood bar held up by upright supports.}} \\
  & \multicolumn{2}{c|}{\scriptsize \textit{that is elevated off the ground.}} & & \\

                   & \includegraphics[width=2.cm,height=2.cm]{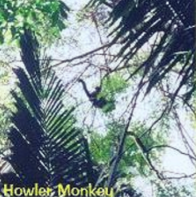}  
                     &\multicolumn{1}{c|}{ \includegraphics[width=2.cm,height=2.cm]{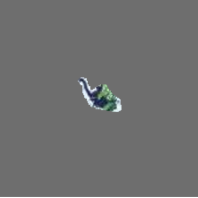}}  
                     & \includegraphics[width=2.cm,height=2.cm]{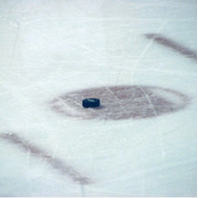}   
                     & \includegraphics[width=2.cm,height=2.cm]{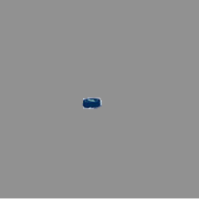} \\ 
        {\tt df} affinity:     & 0.395 & \multicolumn{1}{c|}{0.265} & 0.344& 0.264 \\
        {\tt cd} affinity:     & 0.261 & \multicolumn{1}{c|}{0.250} & 0.254 & 0.272 \\
  {\tt df} prompt: & \multicolumn{2}{c|}{\small \textit{An image of  howler monkey.}} &  \multicolumn{2}{c}{\small \textit{An image of hockey puck.}}  \\
  {\tt cd} prompt: & \multicolumn{2}{c|}{\scriptsize \textit{Four-limbed silhouette with }} &  \multicolumn{2}{c}{\scriptsize \textit{A small, hard, round black rubber disc.}} \\
  & \multicolumn{2}{c|}{\scriptsize \textit{a long tail and a large throat.}} & & \\
               & \includegraphics[width=2.cm,height=2.cm]{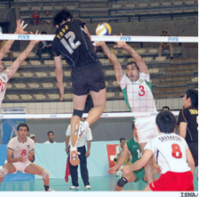}  
                     &\multicolumn{1}{c|}{ \includegraphics[width=2.cm,height=2.cm]{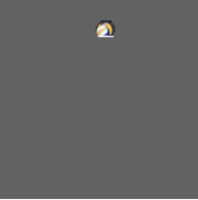}}  
                     & \includegraphics[width=2.cm,height=2.cm]{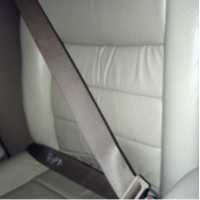}   
                     & \includegraphics[width=2.cm,height=2.cm]{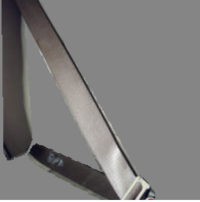} \\ 
        {\tt df} affinity:     & 0.341 & \multicolumn{1}{c|}{0.248} &0.365& 0.297 \\
        {\tt cd} affinity:     & 0.240 & \multicolumn{1}{c|}{0.245} & 0.284 & 0.290 \\
  {\tt df} prompt: & \multicolumn{2}{c|}{\small \textit{An image of volleyball.}} &  \multicolumn{2}{c}{\small \textit{An image of seat belt.}}  \\
  {\tt cd} prompt: & \multicolumn{2}{c|}{\scriptsize \textit{A round, inflated ball made of}} &  \multicolumn{2}{c}{\scriptsize \textit{Flat, narrow strap with a metal buckle.}} \\
&\multicolumn{2}{c}{\scriptsize \textit{synthetic leather or rubber.}}    \\
  
    \end{tabular}
    \caption{Concept description prompts ({\tt cd}) yield affinities which are more robust to the change of context information than the affinities yielded by the default text prompts ({\tt df}).}
    \label{invarint_context}
\end{figure}

\section{Empirical study on concept descriptions}
\label{sec:study_cd}
\label{appendix:study_cd}

\subsection{Qualitative and quantitative study on the reliability of concept descriptions}
\label{appendix:study_prompts}
\label{appendix:more_examples}

Complementing Figure \ref{fig:aff}, Figure \ref{invarint_context} provides additional examples indicating that with the pre-trained CLIP models, concept descriptions have the ability to extract core features. To further support this claim, we present a full quantitative study. The study is conducted with Hard ImageNet \citep{moayeri2022hard} and consists of two parts. First, we remove image background (BG), and observing how the image-text affinities change for default prompts (df) and concept descriptions (cd) respectively. In the second part, we do the same but with foreground (FG) removed.

\begin{table}[h!]
    \caption{Quantitative study on the reliability of concept descriptions verse default prompts. The affinities to concept descriptions are sensitive to changes in the core features of image foregrounds (FG) and remain relatively stable against changes in the non-core features of backgrounds (BG).}
    \label{tab:quan_reli}
    \vskip 1em
    \centering
    \begin{tabular}{cccc}
    \toprule
           & FG \& BG & w/o BG & w/o FG  \\\midrule
        {\tt df} & 0.3473 & 0.2393 (-31.1\%) & 0.3407 (-1.9\%)\\
    {\tt cd} & 0.2660 & 0.2387 (-10.3\%) & 0.1180 (-55.6\%) \\
    \bottomrule
    \end{tabular}
\end{table}

Table \ref{tab:quan_reli} shows the average affinities over all 19,097 images across all 15 classes of Hard ImageNet. The percentages in the table indicate the relative changes w.r.t. the affinities of the original images (FG \& BG). The result shows that the affinities of concept descriptions are much more invariant to changes in non-core features than default prompts (-10.3\% vs. -31.1\%). Moreover, the affinities of concept descriptions are quite responsive (-55.6\%) to changes in core features. In contrast, the affinities of default prompts barely change (-1.9\%) in response to the absence of core features. These results indicate that the affinities associated with concept descriptions are reliable indicators of core visual features in the images. In contrast, the affinities from default prompts are more sensitive to the changes in non-core visual features.

\subsection{Examples of repeated generations of concept descriptions}
\label{appendix:gpt4_consi}

To evaluate the stochasticity of LLM in generating concept descriptions, we repeatedly ask GPT-4 to generate concept descriptions for each iWildCam class. Below are some examples:

\textit{White-lipped Peccary:}
\begin{itemize}
    \item \textbf{Output 1:} {\em Compact, dark grey body with distinctive white markings around the mouth.}
    \item \textbf{Output 2:} {\em Compact, dark gray body with distinctive white markings around the lips.}
    \item \textbf{Output 3:} {\em A stocky body with coarse, dark hair and distinct white markings around the mouth.}
\end{itemize}

\textit{Waterbuck:}
\begin{itemize}
    \item \textbf{Output 1:} {\em Stocky body with long fur, a white ring on the rump, and shaggy brown coat.}
    \item \textbf{Output 2:} {\em Thick, shaggy brown coat with a white ring around the rump and long horns.}
    \item \textbf{Output 3:} {\em A robust antelope with a shaggy brown coat and a white ring around the rump.}
\end{itemize}

These examples show that while GPT-4's outputs for the same class vary slightly, they consistently highlight key visual features. The variations are primarily in the language details, such as the order of features or the terminology used. For example, for the waterbuck, the consistent visual features across all three outputs are the ``stocky (thick, robust) body'', ``white ring on (around) the rump'', and ``shaggy brown coat''.

\subsection{Examples to generate concept descriptions with different LLMs}
\label{appendix:diff_llms}

The examples and experiment results reported in our paper are based on concept descriptions generated by GPT-4. We now consider different LLMs to generate concept descriptions for the classes in the iWildCam dataset. We present some examples and our findings below.

\textbf{Examples:}

\textit{White-lipped Peccary:}
\begin{itemize}
    \item \textbf{GPT-4:} {\em Compact, dark grey body with distinctive white markings around the mouth.}
    \item \textbf{GPT-3.5-Turbo:} {\em Grayish body with a white stripe across the lips.}
    \item \textbf{Llama-3-8b:} {\em Reddish-brown fur with a distinctive white stripe.}
    \item \textbf{Llama-3-70b:} {\em Dark brown pig-like animal with a distinctive white stripe on the lips.}
    \item \textbf{Llama-3-405b:} {\em Stout, brown body with a distinctive white stripe on the lips and sharp tusks.}
\end{itemize}

\textit{Waterbuck:}
\begin{itemize}
    \item \textbf{GPT-4:} {\em Stocky body with long fur, a white ring on the rump, and shaggy brown coat.}
    \item \textbf{GPT-3.5-Turbo:} {\em Antelope species with a shaggy coat and a white ring on its rump.}
    \item \textbf{Llama-3-8b:} {\em Reddish-brown coat with a distinctive horn shape and a long, slender body.}
    \item \textbf{Llama-3-70b:} {\em Large, brown antelope with a distinctive white ring on its rump and a long, shaggy coat.}
    \item \textbf{Llama-3-405b:} {\em Large, brown body with a distinctive pair of curved horns and a white ring on the rump.}
\end{itemize}

\textit{Zebra:}
\begin{itemize}
    \item \textbf{GPT-4:} {\em Distinctive black and white striped coat with an upright mane and slender legs.}
    \item \textbf{GPT-3.5-Turbo:} {\em Medium-sized equid with black and white stripes.}
    \item \textbf{Llama-3-8b:} {\em Black and white striped coat with a distinctive mane and a long, slender body.}
    \item \textbf{Llama-3-70b:} {\em Black and white striped equine with a distinctive body shape and a long mane.}
    \item \textbf{Llama-3-405b:} {\em Medium-sized, black and white body with a distinctive striped pattern and a long, flowing mane.}
\end{itemize}

\textbf{Findings:}
\begin{enumerate}
    \item The concept descriptions generated by different LLMs capture common key visual features for the same class, such as “white stripe” for white-lipped peccary, “white ring on the rump” for waterbuck, and “black and white stripes” for zebra.
    \item Comparing outputs of GPT-3.5-Turbo and GPT-4, the outputs generated by GPT-4 cover more visual details, while those by GPT-3.5-Turbo are generally shorter.
    \item As model size increases, from Llama-3-8b to Llama-3-405b, the generated outputs become more detailed. For instance, Llama-3-8b mentions “white stripe” but does not specify its location on the lips as Llama-3-70b/405b do. Additionally, Llama-3-8b sometimes makes factual errors, such as describing the incorrect color for white-lipped peccary and waterbuck.
    \item GPT-3.5-Turbo sometimes deviated from the prompt instructions, generating outputs that include non-visual features, such as the "distinctive call" of the \textit{Great Tinamou}, which are not visible.
\end{enumerate}

Despite variations in the quality of concept descriptions generated by different LLMs, as discussed in Section \ref{subsec:reliability}, DRM-trained models using descriptions from any of these tested LLMs consistently show significant improvements in OOD performance compared to FLYP-trained models. This demonstrates that the effectiveness of DRM-trained models is not sensitive to the choice of the LLM.

\section{Additional experiment results}
\label{appendix:add_results}
\subsection{Detailed performance on ImageNet OOD test sets}
\label{detail_imagenet}


The average accuracy across the five ImageNet OOD test sets has been presented in Table \ref{tab:combined_performance}. We report the
detailed results for each OOD test set in Table \ref{tab:imagenet_detailed_shifts}. Without WiSE-FT, DRM substantially outperforms the previous best fine-tuning results by FLYP on ImageNet-R and ImageNet-A, with increases from 71.4 to 77.8 and from 48.1 to 53.3, respectively. Meanwhile, the ID performance is at a comparable level.
With WiSE-FT, the improvements remain significant, rising from 76.0 to 79.5 on ImageNet-R and from 53.0 to 54.2 on ImageNet-A.

\begin{table*}[h]
\setlength{\tabcolsep}{0pt}
\centering
\caption{Performance on ImageNet OOD variants with CLIP ViT-B/16. ``OOD'' stands for the average performance over the OOD datasets.}
\label{tab:imagenet_detailed_shifts}
\vskip 1em
\resizebox{\textwidth}{!}{
\begin{tabularx}{1.2\textwidth}{@{\hskip 4pt} c @{\hskip 12pt} *{7}{Y} @{\hskip 8pt} *{7}{Y}}
\toprule
                          &                                                            & \multicolumn{5}{c}{w/o WiSE-FT}                                                                           &                                                   &                                                            & \multicolumn{5}{c}{WiSE-FT}                                                                           &               \\ \cmidrule(lr{1.3em}){2-8} \cmidrule(lr){9-15} 
Method & ID                                                         & Im-V2         & Im-R          & Im-A          & Sketch        & ONet                          & OOD                     & ID                                                         & Im-V2         & Im-R          & Im-A          & Sketch        & ONet                          & OOD      \\ \midrule
0-shot                  & 68.3                                  & 61.9          & 77.7          & 50.0          & 48.3          & 55.4                                  & 58.7                         & 68.3                                  & 61.9          & 77.7          & 50.0          & 48.3          & 55.4                                  & 58.7          \\
LP                        & 79.9                                  & 69.8          & 70.8          & 46.4          & 46.9          & 52.1                                  & 57.2                         & 80.0                                  & 70.3          & 72.4          & 47.8          & 48.1          & 52.8                                  & 58.3          \\
FT                        & 81.3                                  & 71.2          & 66.1          & 37.8          & 46.1          & 53.3                                  & 54.9                         & 82.5                                  & 72.8          & 74.9          & 48.1          & 51.9          & 59.0                                  & 61.3          \\
L2-SP                     & 81.7                                  & 71.8          & 70.0          & 42.5          & 48.5          & 56.2                                  & 57.8                         & 82.2                                  & 72.9          & 75.1          & 48.6          & 51.4          & 58.9                                  & 61.4          \\
LP-FT                     & 81.7                                  & 72.1          & {73.5} & 47.6          & {50.3} & 58.2                                  & 60.3                & 82.1                                  & 72.8          & 75.3          & 50.1          & 51.7          & 59.2                                  & 61.8          \\ 
 
FLYP                      & \textbf{82.6} & {73.0} & 71.4          & {48.1} & 49.6          & \textbf{58.7} & 60.2 & \textbf{82.9} & {73.5} & {76.0} & {53.0} & {52.3} & \textbf{60.8} & {63.1} \\ \midrule

DRM                    & 82.0 & \textbf{73.4} & \textbf{77.8}          & \textbf{53.3} & \textbf{52.5}          & 58.6 & \textbf{63.2} & 82.4 & \textbf{73.9} & \textbf{79.5} & \textbf{54.2} & \textbf{52.8} & 59.7 & \textbf{64.0} \\ \bottomrule
\end{tabularx}}
\end{table*}

\subsection{Performance on Dollar Street-DA and GeoYFCC-DA}
\label{result_da}

We followed the train-test split outlined by \citet{prabhu2022can}. As there was no dedicated validation set, we split 20\% of the training set for validation purposes. The ID and OOD performance results are reported based on the ID performance on the validation set and the OOD performance on the test set, which consists of images from countries not included in the training and validation sets.

The results presented in Table \ref{tab:big_distribution_shift_table4} demonstrate that, compared to FLYP-trained models, DRM-trained models exhibit improved performance on images from new countries.

\begin{table}[H]
\centering
\vspace{-3mm}
\caption{ID and OOD performance on Dollar Street-DA and GeoYFCC-DA with CLIP ViT-B/16.}

\vskip 1em

\begin{tabular}{lcccc}
\toprule
& \multicolumn{2}{c}{Dollar Street-DA}   & \multicolumn{2}{c}{GeoYFCC-DA}     \\ \cmidrule(lr){2-3}  \cmidrule(lr){4-5} 
Method & ID & OOD & ID & OOD  \\ 
\midrule
0-shot &  64.0{\scriptsize $\pm${0.0}} &  53.7{\scriptsize $\pm${0.0}} &  56.2{\scriptsize $\pm${0.0}} &  52.3{\scriptsize $\pm${0.0}} \\

FLYP &   \textbf{82.4{\scriptsize $\pm${0.3}}} &71.8{\scriptsize $\pm${0.2}}  & 71.0{\scriptsize $\pm${0.3}} & 58.0{\scriptsize $\pm${0.3}}  \\
FLYP+WiSE-FT & \textbf{82.4{\scriptsize $\pm${0.2}}} & 72.7{\scriptsize $\pm${0.2}}& 71.2{\scriptsize $\pm${0.3}} & 58.7{\scriptsize $\pm${0.2}}   \\
  
DRM  & 81.4{\scriptsize $\pm${0.2}}  & 73.9{\scriptsize $\pm${0.3}} & \textbf{71.8{\scriptsize $\pm${0.4}}}  & 62.5{\scriptsize $\pm${0.4}}  \\
  
DRM+WiSE-FT  & 82.0{\scriptsize $\pm${0.1}}  & \textbf{74.7{\scriptsize $\pm${0.2}}} &  \textbf{71.8{\scriptsize $\pm${0.3}}}  & \textbf{63.0{\scriptsize $\pm${0.2}}}  \\
\bottomrule
\end{tabular}

\label{tab:big_distribution_shift_table4}
\end{table}

\subsection{Comparison to some more recent methods}
\label{result_morebaseline}

As discussed in Appendix \ref{sec:recent_related-work}, there are some more recent robust fine-tuning methods. We include a comparison to some of those methods based on the results of fine-tuning CLIP ViT-B/16 on iWildCam and FMoW datasets. The results are reported in Table \ref{tab:extended_iwildcam_fmow}. The results clearly show that, the more recent methods still significantly lag behind DRM in term of OOD performance.

\begin{table}[h!]
\centering
\caption{Performance results for iWildCam and FMoW with CLIP ViT-B/16 including some more recent methods.}

\vskip 1em

\resizebox{\textwidth}{!}{%
\begin{tabular}{ccccccccc}
\toprule
& \multicolumn{4}{c}{iWildCam} & \multicolumn{4}{c}{FMoW} \\
\cmidrule(lr){2-5} \cmidrule(lr){6-9}
Method & \multicolumn{2}{c}{w/o WiSE-FT} & \multicolumn{2}{c}{WiSE-FT} & \multicolumn{2}{c}{w/o WiSE-FT} & \multicolumn{2}{c}{WiSE-FT} \\
\cmidrule(lr){2-3} \cmidrule(lr){4-5} \cmidrule(lr){6-7} \cmidrule(lr){8-9}
& ID & OOD & ID & OOD & ID & OOD & ID & OOD \\
\midrule
0-shot & 8.7{\scriptsize $\pm${0.0}} & 11.0{\scriptsize $\pm${0.0}} & - & - & 20.4{\scriptsize $\pm${0.0}} & 18.7{\scriptsize $\pm${0.0}} & - & - \\
LP & 44.5{\scriptsize $\pm${0.6}} & 31.1{\scriptsize $\pm${0.4}} & 45.5{\scriptsize $\pm${0.6}} & 31.7{\scriptsize $\pm${0.4}} & 48.2{\scriptsize $\pm${0.1}} & 30.5{\scriptsize $\pm${0.3}} & 48.7{\scriptsize $\pm${0.1}} & 31.5{\scriptsize $\pm${0.3}} \\
FT & 48.1{\scriptsize $\pm${0.5}} & 35.0{\scriptsize $\pm${0.5}} & 48.1{\scriptsize $\pm${0.5}} & 35.0{\scriptsize $\pm${0.5}} & 68.5{\scriptsize $\pm${0.1}} & 39.2{\scriptsize $\pm${0.7}} & 68.5{\scriptsize $\pm${0.1}} & 41.5{\scriptsize $\pm${0.5}} \\
L2-SP & 48.6{\scriptsize $\pm${0.4}} & 35.3{\scriptsize $\pm${0.3}} & 48.6{\scriptsize $\pm${0.4}} & 35.3{\scriptsize $\pm${0.3}} & 68.6{\scriptsize $\pm${0.1}} & 39.4{\scriptsize $\pm${0.6}} & 68.4{\scriptsize $\pm${0.1}} & 40.3{\scriptsize $\pm${0.6}} \\
LP-FT & 49.7{\scriptsize $\pm${0.5}} & 34.7{\scriptsize $\pm${0.4}} & 50.2{\scriptsize $\pm${0.5}} & 35.7{\scriptsize $\pm${0.4}} & 68.4{\scriptsize $\pm${0.2}} & 40.4{\scriptsize $\pm${1.0}} & 68.5{\scriptsize $\pm${0.2}} & 42.4{\scriptsize $\pm${0.7}} \\
FLYP & 52.2{\scriptsize $\pm${0.6}} & 35.6{\scriptsize $\pm${1.2}} & 52.5{\scriptsize $\pm${0.6}} & 37.1{\scriptsize $\pm${1.2}} & 68.6{\scriptsize $\pm${0.2}} & 41.3{\scriptsize $\pm${0.8}} & 68.9{\scriptsize $\pm${0.3}} & 42.0{\scriptsize $\pm${0.9}} \\ \midrule

CLIPood &48.4{\scriptsize $\pm${0.4}}&  36.1{\scriptsize $\pm${0.4}} &48.3{\scriptsize $\pm${0.3}}&36.5{\scriptsize $\pm${0.4}}&68.2{\scriptsize $\pm${0.3}} &40.8{\scriptsize $\pm${0.9}}&68.3{\scriptsize $\pm${0.3}}&41.2{\scriptsize $\pm${0.7}}
\\

TPGM&47.5{\scriptsize $\pm${0.3}}&35.9{\scriptsize $\pm${0.4}}&46.8{\scriptsize $\pm${0.3}}& 36.2{\scriptsize $\pm${0.3}}&68.4{\scriptsize $\pm${0.3}} & 39.6{\scriptsize $\pm${0.8}}&67.8{\scriptsize $\pm${0.2}} & 39.9{\scriptsize $\pm${0.7}}\\
LipSum-FT &50.7{\scriptsize $\pm${0.8}}  &36.6{\scriptsize $\pm${0.7}}&48.4{\scriptsize $\pm${0.5}}&36.9{\scriptsize $\pm${0.6}} &68.4{\scriptsize $\pm${0.3}} & 41.3{\scriptsize $\pm${1.0}}&68.1{\scriptsize $\pm${0.3}}& 42.0{\scriptsize $\pm${0.5}} \\ 

CaRot &49.7{\scriptsize $\pm${0.4}}&34.3{\scriptsize $\pm${0.3}}&48.3{\scriptsize $\pm${0.3}}&34.7{\scriptsize $\pm${0.3}}&68.8{\scriptsize $\pm${0.2}} &39.8{\scriptsize $\pm${0.6}}&68.3{\scriptsize $\pm${0.2}}&40.7{\scriptsize $\pm${0.5}} \\\midrule

DRM & \textbf{54.1{\scriptsize $\pm${0.5}}} & \textbf{40.0{\scriptsize $\pm${0.6}}} & \textbf{55.3{\scriptsize $\pm${0.4}}} & \textbf{41.4{\scriptsize $\pm${0.7}}} & \textbf{68.7{\scriptsize $\pm${0.3}}} & \textbf{45.9{\scriptsize $\pm${1.1}}} & 68.7{\scriptsize $\pm${0.2}} & \textbf{46.1{\scriptsize $\pm${0.8}}} \\ 
\bottomrule
\end{tabular}%
}
\label{tab:extended_iwildcam_fmow}
\end{table}

\subsection{Full Ablation Study}
\label{full_ablation}

\paragraph{Setup.}
Given a labeled dataset $\{(x_i, y_i)\}_{i=1}^N$ sampled from the training domain $d_s$, the final DRM objective \eqref{drm_loss}, i.e., $R_\mathrm{s}(\theta; \mathcal{T}^{\tt df}) + \lambda {R}_\mathrm{s}^\mathrm{c}(\theta; \mathcal{T}^{\tt cd})$, for fine-tuning zero-shot models can be expanded as
\begin{equation}
    \frac{1}{N}\sum_{i=1}^N \Big[-\log \hat{p}^{\tt df}_{\theta}(y_i|x_i) - \lambda  \sum_{y' \in \mathcal{Y}} \hat{p}^{{\tt pr}}_{\theta_0}(y'|x_i) \log \hat{p}^{\tt cd}_{\theta}(y'|x_i)\Big],
    \label{eq:expanded-final-drm}
\end{equation}
where $\hat{p}^{\tt df}_{\theta}(y|x)$ and $\hat{p}^{\tt cd}_{\theta}(y|x)$ are the classifiers \eqref{eq:clip-classifier} induced by the default prompts $\mathcal{T}^{\tt df} = \{t_y^{\tt df} \,|\, y \in \mathcal{Y}\}$ and the concept descriptions $\mathcal{T}^{\tt cd} = \{t_y^{\tt cd} \,|\, y \in \mathcal{Y}\}$, respectively; and $\hat{p}^{{\tt pr}}_{\theta_0}(y|x) = \tilde{p}_\mathrm{c}(y|x)$ which is defined by \eqref{eq:proxy} to estimate $\pc(y|x)$.
Here, we use $\hat{p}^{{\tt pr}}_{\theta_0}(y|x)$ (where ${\tt pr}$ stands for `proxy') instead of $\tilde{p}_\mathrm{c}(y|x)$ to ease the discussion of possible variations of DRM.

Consider the following generalized form of \eqref{eq:expanded-final-drm} with three varying options, ${\tt t1}$ and ${\tt t2}$ indicating the classifier types defined with different sets of text prompts, and ${\tt type}$ indicating the type of model used as the proxy for $\pc(y|x)$:
\begin{equation}
\label{twoprompt_model_ablation}
\frac{1}{N}\sum_{i=1}^N \Big[-\log \hat{p}^{\tt t1}_{\theta}(y_i|x_i) - \lambda  \sum_{y' \in \mathcal{Y}} \hat{p}^{{\tt {type}}}_{\theta_0}(y'|x_i) \log \hat{p}^{\tt t2}_{\theta}(y'|x_i)\Big], 
\end{equation}
As stated in \eqref{eq:expanded-final-drm}, our final DRM training objective \eqref{drm_loss} uses ${\tt t1}={\tt df}$ in the ERM term, with ${\tt t2}={\tt cd}$ and ${\tt type}={\tt pr}$ in the regularization term. We denote this as our standard setting, (S) in short.
We conduct the following ablation study with the pre-trained CLIP ViT-L/14 and fine-tune the model on the iWildCam dataset, with results presented in Table \ref{tab:ablation_full}.

\begin{table}[t]
    \centering
      \caption{Ablation study on DRM with CLIP ViT-L/14 (w/o WiSE-FT) on iWildCam.}

      \vskip 1em
      
     \resizebox{1\textwidth}{!}{
    \begin{tabular}{ccc@{\hskip 4\tabcolsep}ccccc@{\hskip 4\tabcolsep}c@{\hskip 0.9\tabcolsep}c}
   \toprule
  & \multirow{2}[2]{*}{\textbf{General setting}} & & \multicolumn{5}{c@{\hskip 4\tabcolsep}}{\textbf{Specification}} & \multicolumn{2}{@{\hskip -0.7\tabcolsep}c}{\textbf{Performance}} \\
  \cmidrule(l{-2mm}r{7mm}){4-8} \cmidrule(l{-1mm}r{2mm}){9-10}
  & &   & {${\tt t1}$} & {${\tt t2}$}&  {${\tt type}$} & { Classifier comb.}  &{Infer w/} & {ID} & {OOD} \\ \midrule

       &\multicolumn{1}{c}{ Standard DRM}& (S) &${\tt df}$ &${\tt cd}$&${\tt pr}$
  & {  joint training}
  &{\eqref{eq:inference}}& \textbf{61.8}  & \textbf{49.2}  \\ \cmidrule(){1-10}
  \multirow{2}{*}{(a)} & \multirow{2}{*}{\shortstack{ Infer with one classifier \\ after dual classifier training}} &  {(a1)} & ${\tt df}$ &${\tt cd}$&${\tt pr}$
  & {  joint training}
  &${\tt df}$ & 60.4 & 45.1 \\  
& &  {(a2)} &${\tt df}$   &${\tt cd}$&${\tt pr}$
  & {  joint training}
  &${\tt cd}$ & 54.8 & 47.2 \\ \cmidrule(){1-10}
     \multirow{2}{*}{(b)} & \multirow{2}{*}{\shortstack{  Vanilla DRM using  \\ one set of text prompts}} &  {(b1)} &${\tt df}$ &${\tt df}$&${\tt pr}$ 
  & {  joint training}
  &${\tt df}$ & 54.4 & 45.1 \\  
& & {(b2)}&${\tt cd}$ &${\tt cd}$&${\tt pr}$
  & {  joint training}
  &${\tt cd}$& 54.0 & 46.1 \\ \cmidrule(){1-10}

  \multirow{3}{*}{(c)} & \multirow{3}{*}{\shortstack{ Use different proxy models}}

    &  {(c1)} &${\tt df}$ &${\tt cd}$&${\tt cd}$
  & {  joint training}
  &{\eqref{eq:inference}} &   32.1  &  24.2 \\  

 & & {(c2)} &${\tt df}$ &${\tt cd}$&${\tt pr\text{-}df}$
  & {  joint training}
  &{\eqref{eq:inference}} &   54.4  &  45.1 \\ 

 & & {(c3)} &${\tt df}$ &${\tt cd}$&${\tt one\text{-}hot}$
  & {  joint training}
  &{\eqref{eq:inference}} &   57.3  &  45.1 \\  \cmidrule(){1-10}

  \multirow{3}{*}{(d)} & \multirow{3}{*}{\shortstack{ Use only one \\ risk for training}} &  {(d1)}&${\tt df}$ &/
  & /&/
  &${\tt df}$ & 56.0 & 41.9 \\  
 & & {(d2)} &${\tt cd}$ &/&/
  & /
  &${\tt cd}$& 56.9 & 43.4 \\
& & {(d3)} &/ &${\tt cd}$&${\tt pr}$
  & /
  &${\tt cd}$& 51.7 & 46.3 \\
  
  \cmidrule(){1-10}

  \multirow{2}{*}{(e)} & \multirow{2}{*}{\shortstack{ Combine independently \\ trained classifiers}} &  {(e1)}&${\tt df}$ &${\tt cd}$&${\tt pr}$
  & { model ensemble}
  &{\eqref{eq:inference}}& 59.7 & 45.7 \\  
 & & {(e2)}&${\tt df}$ &${\tt cd}$&${\tt pr}$
  & { weight average}
  &{\eqref{eq:inference}}& 57.5 & 44.7 \\ \bottomrule

     \end{tabular}}

    \label{tab:ablation_full}
\end{table}

\paragraph{(a) Inference options after dual classifier training:} Two classifiers are involved in our DRM training: $\hat{p}^{\tt df}_{\theta}(y|x)$ and $\hat{p}^{\tt cd}_{\theta}(y|x)$. As outlined in (\ref{eq:inference}), we combine both classifiers for inference. An alternative is to only use one of the two classifiers for inference. We denote inference with only $\hat{p}^{{\tt df}}_{\theta}(y|x)$ as (a1), and with only $\hat{p}^{{\tt cd}}_{\theta}(y|x)$ as (a2). The comparison between (a1), (a2), and (S) in Table \ref{tab:ablation_full} shows combining both classifiers for inference enhances both ID and OOD performance compared to using either alone. This reveals that the two classifiers have a complementary effect as illustrated in Figure~\ref{fig:clarity-and-sota}, and corroborates our view that ERM and WRM are both vital to OOD robustness.
 
\paragraph{(b) Vanilla DRM using a single set of text prompts:}  In our standard DRM setting (S), ${\tt t1}={\tt df}$ and ${\tt t2}={\tt cd}$. The vanilla DRM we discussed in Section \ref{sec.4}  uses ${\tt t1}={\tt t2}={\tt df}$. Alternatively, one can also consider ${\tt t1}={\tt t2}={\tt cd}$.
We experiment with these two alternative settings denoted by (b1) and (b2) in Table \ref{tab:ablation_full}.
The contrast between (b1) and (S) confirms our intuition: using the concept descriptions $\mathcal{T}^{\tt cd}$ for $\hat{p}^{\tt t2}_{\theta}(y|x)$, i.e., ${\tt {t2}} = {\tt {cd}}$,
enhances robust feature preservation and leads to better OOD performance.
The other alternative (b2), which employs $\mathcal{T}^{\tt cd}$ for both $\hat{p}^{\tt t1}_{\theta}(y|x)$ and $\hat{p}^{\tt t2}_{\theta}(y|x)$, i.e., ${\tt {t1}} = {\tt {t2}} = {\tt {cd}}$, slightly improves (b1).
Intriguingly, (b2) is still much worse than (S) despite they both use ${\tt {cd}}$ for ${\tt {t2}}$.


\paragraph{(c) Proxy model design:} In our standard setting, ${\tt type}={\tt pr}$. As discussed in Section~\ref{sec.4}, the proxy term $\hat{p}^{{\tt pr}}_{\theta_0}(y|x_i)$ is based on the affinity 
$A_{\theta_0}(x,t^{\tt cd}_y)=\langle f_{\phi_0}(x),  g_{\psi_0}(t^{\tt cd}_y) \rangle$ according to the pre-trained CLIP model $\theta_0 = (\phi_0, \psi_0)$ and the set of concept descriptions $\mathcal{T}^{\tt cd}$.
In Section \ref{sec.4}, we also mentioned the following direct estimation of the oracle model $\pc(y|x)$:
\begin{equation}
    \hat{p}_{\theta_0}^{\tt cd}(y|x) = \frac{\exp(A_{\theta_0}(x,t^{\tt cd}_y)/\tau)}{
    \sum_{y' \in \mathcal{Y}} \exp(A_{\theta_0}(x,t^{\tt cd}_{y'})/\tau)
    }.
    \label{eq:initital_est}
\end{equation}
However, as discussed in Section \ref{sec.4}, $\hat{p}_{\theta_0}^{\tt cd}(y|x)$ is susceptible to artifact terms. Consequently, we made a technical adjustment to mitigate the influence of these terms, resulting in the refined $\tilde{p}_\mathrm{c}(y|x)$, which is denoted as $\hat{p}^{{\tt pr}}_{\theta_0}(y|x)$ here. As shown in Table \ref{tab:ablation_full}, the importance of this adjustment is empirically verified by the much lower performance of (c1) compared to (S).   


One can also define $\hat{p}^{{\tt pr}\text{-}{\tt df}}_{\theta_0}(y|x)$ by replacing $\mathcal{T}^{\tt cd}$ with $\mathcal{T}^{\tt df}$ in the formulation of $\hat{p}^{\tt pr}_{\theta_0}(y|x)$.
As shown by the result of (c2), this alternative still underperforms $\hat{p}^{\tt pr}_{\theta_0}(y|x)$ used in the standard setting. This discrepancy can be explained by the fact that the affinities between default text prompts and images are easily affected by changes in the non-core visual features instead of focusing on the core visual features, which has been discussed in Appendix \ref{appendix:study_cd}.

Another simple alternative, denoted by (c3), is to employ the ground-truth one-hot labels as the proxy. Perhaps unsurprisingly, the OOD performance of (c3) is notably inferior to (S) based on the affinities between the images and the concept descriptions.

\paragraph{(d) Training with either ERM or WRM:} Training with only the first term in (\ref{drm_loss}) results in ERM models (d1) and (d2),
whereas training with only the second term leads to a WRM model (d3).
Comparing them with DRM models (a1) and (a2), it is clear that models trained to minimize a single risk underperform those trained to minimize both risks, highlighting the importance of dual risk minimization.


\paragraph{(e) Classifier combination strategy:} Our standard DRM training jointly minimizes the two risks, but one can also train an ERM model $\hat{p}^{{\tt df}}_{\theta_{\text{ERM}}}(y|x)$ and a WRM model $\hat{p}^{{\tt cd}}_{\theta_{\text{WRM}}}(y|x)$ separately.  These models can be combined for inference using techniques like model ensembling or weight-space averaging. 
The last two rows of Table \ref{tab:ablation_full} show that combining (d1) and (d3) via model ensembling or weight-space averaging generally underperforms joint training (S).


\subsection{{Results of applying DRM with FT and LP-FT}}
\label{appendix:drm_ft}

Our experiment results reported in Section \ref{sec:experiments} are based on combining DRM with FLYP. Specifically, we follow FLYP to update both the image encoder and text encoder, and utilize the FLYP loss for the ERM component of \eqref{drm_loss}. The results showed in Table \ref{tab:ft} indicate that even without FLYP, where DRM is applied with standard full fine-tuning (Row 2), our method still outperforms FLYP (Row 1), which itself has been shown to surpass standard full fine-tuning in the ERM setting \citep{goyal2023finetune}. This performance advantage is demonstrated in the results of fine-tuning CLIP ViT-L/14 on iWildCam:

\begin{table}[h!]
\setlength{\tabcolsep}{4pt}
\centering
\vspace{-2mm}
\caption{Results of applying DRM under different settings.}
\vskip 1em
\begin{tabular}{llllll}
\toprule
Row & FLYP & LP-FT & DRM & ID & OOD \\ 
\midrule
1 &Yes &No &No&56.0 &41.9\\
2 &No&No&Yes&54.4& 43.9\\
3 &No& Yes &Yes &56.5& 46.3 \\
4 &Yes& No &Yes& 61.8 &49.2\\
\bottomrule
\end{tabular}
\label{tab:ft}
\end{table}

As the table above demonstrates, combining DRM with LP-FT (Row 3) enhances performance over just DRM with standard full fine-tuning (Row 2). Furthermore, integrating DRM with FLYP (Row 4) yields even more significant improvements. Consequently, we adopt the combination of DRM and FLYP as our default setting when fine-tuning CLIP models.

\subsection{Results of applying DRM on ImageNet pre-trained ResNet50}
\label{result:resnet50}

While this work focus on the fine-tuning of zero-shot models that are pre-trained on large-scale image-text pairs, we also explore the possibility of applying DRM on fine-tuning the ImageNet pre-trained CNN models.

When applying DRM to CNN models, we add two randomly initialized classification heads on top of the model. Analogous to the application of DRM on zero-shot models, one classification head is trained using cross-entropy loss with respect to the ground-truth labels, while the other is trained using cross-entropy loss relative to the soft labels generated by the pre-trained zero-shot model. We employ this approach to fine-tune an ImageNet pre-trained ResNet50 model on the iWildCam dataset, utilizing soft labels generated by the CLIP ViT-L/14 model. During inference, the outputs from the two classification heads are combined in a manner similar to that described in \eqref{eq:inference}.
The results are presented in Table \ref{tab:resnet50}. It is evident from the results that DRM significantly enhances the OOD performance of ResNet50 compared to the ERM.

\begin{table}[h!]
\setlength{\tabcolsep}{4pt}
\centering
\vspace{-2mm}
\caption{Results of applying DRM on fine-tuning ImageNet pre-trained ResNet50 on iWildCam.}
\vskip 1em
\begin{tabular}{lll}
\toprule
Method & ID & OOD \\ 
\midrule

ERM+FT &51.6 & 33.7   \\ 
ERM+LP-FT & 50.5&  36.4 \\ 
DRM+LP-FT & 51.0&  39.1 \\ 
\bottomrule
\end{tabular}
\label{tab:resnet50}
\end{table}


\section{Training details}
\label{app:traing_details}

\subsection{Hyperparameter settings}

We primarily adopted the hyperparameter settings from the code released by FLYP~\citep{goyal2023finetune}.

Specifically, for iWildCam, the settings were as follows: \texttt{training epochs=20}, \texttt{learning rate=1e-5}, \texttt{batch-size=256}, and \texttt{optimizer=AdamW} with \texttt{weight decay=0.2};

For FMoW, the settings were as follows: \texttt{training epochs=20}, \texttt{learning rate=1e-5}, \texttt{batch-size=256}, and \texttt{optimizer=AdamW} with \texttt{weight decay=0.2};

For ImageNet, the settings were as follows: \texttt{training epochs=10}, \texttt{learning rate=1e-5}, \texttt{batch-size=256}, and \texttt{optimizer=AdamW} with \texttt{weight decay=0.1}. 

In all experiments, for the images, we applied the standard CLIP image pre-processing, which included resizing, center cropping, and normalization. For the texts, we applied the standard CLIP text pre-processing, which tokenized the texts into a series of integers, each representing a unique series of characters.

The value of $\lambda$ used in our DRM training was picked from $\{1, 2, 3, 4, 5\}$ based on the performance on the ID validation set. Following \cite{goyal2023finetune}, we also implemented early stopping based on the ID validation performance.

The hyperparameter  $\rho$ in WiSE-FT, $\theta_{\text{wise-ft}}= \rho \cdot \theta_{\text{zs}} + (1-\rho) \cdot \theta_{\text{ft}}$, is chosen from  the range $0.1$ to $0.9$ via ID validation.

\subsection{Machines}

All experiments were conducted on a high-performance computing cluster equipped with NVIDIA DGX H800 nodes. Two H800 GPUs with 80 GB VRAM were utilized for all trainings involving CLIP ViT-B/16 and CLIP ViT-L/14, while four H800 GPUs were employed for the training of CLIP ViT-L/14@336. 

\subsection{Analysis of computational costs}
\label{app:computational-costs}

In our experiments, we implemented the ERM part of DRM with FLYP \citep{goyal2023finetune}. The additional computational cost of DRM, compared to FLYP, primarily arises from the preparation of soft labels for the targets of the second risk in DRM and the minimization of this second risk. In short, the training and inference cost for DRM increased by about 20\% from FLYP. This additional cost is insignificant compared to the attained performance gain. In Table \ref{tab:model_comparison}, we detail the computational costs and timing for fine-tuning CLIP ViT-L/14 models on ImageNet.

The computation time reported below is based on the setting that training batch size=256 and inference batch size=1024. There are 1,281,167 training images in ImageNet, and thus there are 5005 training batches.

\begin{table}[h!]
\centering
\caption{Analysis of computational costs for DRM.}
\vspace{1em}
\resizebox{\textwidth}{!}{
\begin{tabular}{lp{5cm}p{5cm}p{3cm}p{3cm}}
\toprule
\textbf{Model} & \textbf{Generating Concept Description by LLM} & \textbf{Soft Label Generation \eqref{eq:proxy}} & \textbf{Training} & \textbf{Inference} \\
\midrule
FLYP & N/A & N/A & In average: \textbf{58s/100 batches}, $\sim$48 mins per training epoch & Inference on 100 batches of images takes \textbf{$\sim$1.5s}. \\
\midrule
DRM & We utilized the GPT-4-turbo API to generate concept descriptions for 1,000 ImageNet classes, inputting 10 classes at a time to ensure quality. The generation cost is \textbf{under 10 US Dollar} (the price of the API is 10 US Dollar/1 million prompt tokens). We are unaware of the computational cost as the model details of GPT-4 are unknown. & The primary computational cost arises from using pre-trained CLIP models to generate image and text embeddings from 1,281,167 training images and 1,000 concept descriptions. Soft labels are created using inner products between these embeddings, with some technical adjustments. The entire process takes \textbf{less than 3 minutes}. & In average: \textbf{71s/100 batches}, $\sim$58 mins per training epoch & Inference on 100 batches of images takes \textbf{$\sim$1.8s}. \\
\bottomrule
\end{tabular}
}
\label{tab:model_comparison}
\end{table}

\newpage

\section{Limitations}
\label{appendix:limits}


Our research utilized GPT-4 to generate concept descriptions for the core visual features of various classes across different domains. It is important to note that the scope of GPT-4's knowledge in certain domains might be limited, and as a result, the model may not always generate useful concept descriptions. For instance, we found that GPT-4 generated inaccurate concept descriptions in medical imaging fields like ocular disease and breast histology.

Additionally, due to the vast number of concept descriptions generated, we have not been able to verify the accuracy of each generated concept description. To enhance the quality of these descriptions, potential improvements could involve engaging domain experts to review and correct errors, or generating descriptions manually. Another approach could be to gather visual prototypes and use advanced multimodal LLMs such as GPT-4V~\citep{achiam2023gpt}, LLaVA~\citep{liu2023visual}, or MiniGPT-4~\citep{zhu2024minigpt}, which might yield more precise descriptions of the core visual features.

A further limitation concerns the CLIP models used in our experiments. These models may not perform optimally across all domains, particularly in less common areas, where they may lack requisite knowledge in both images and text. The effectiveness of our DRM method is therefore contingent upon the breadth and depth of the pre-training data of CLIP models. Unfortunately, the specifics of the CLIP pre-training dataset have not been disclosed by OpenAI, adding an element of uncertainty to the performance of our method in niche domains.

\end{document}